\documentclass[letterpaper]{article} 
\usepackage{aaai2027}  
\usepackage[hyphens]{url}  
\usepackage{graphicx} 
\usepackage{natbib}  
\usepackage{caption} 
\usepackage{algorithm}
\usepackage{algorithmic}
\usepackage{amsmath}
\usepackage{amssymb}
\usepackage{tabularx}
\usepackage{times}
\usepackage{mathtools}
\usepackage{latexsym}
\usepackage{booktabs}
\usepackage{pifont}
\usepackage{graphicx}
\usepackage{xcolor}
\newcommand{\cmark}{\ding{51}}%
\newcommand{\xmark}{\ding{55}}%

\usepackage{amsmath,amssymb,amsfonts,amsthm}

\usepackage{newfloat}
\usepackage{listings}
\DeclareCaptionStyle{ruled}{labelfont=normalfont,labelsep=colon,strut=off} 
\floatstyle{ruled}
\newfloat{listing}{tb}{lst}{}
\floatname{listing}{Listing}
\newtheorem{assumption}{Assumption}
\newtheorem{theorem}{Theorem}
\newtheorem{remark}{Remark}
\newtheorem{corollary}{Corollary}
\newtheorem{lemma}{Lemma}
\newtheorem{proposition}{Proposition}

\usepackage{booktabs}

\title{Personalized Federated Direct Preference Optimization via LoRA Adapters}
\author{
    Zhanhong Jiang\textsuperscript{\rm 1}\corresponding, Soumik Sarkar\textsuperscript{\rm 1, \rm 2}\\
}
\affiliations{
    \textsuperscript{\rm 1}Translational AI Center, Iowa State University, Ames, IA 50011 USA \\
    \textsuperscript{\rm 2}Department of Mechanical Engineering, Iowa State University, Ames, IA 50011 USA \\

    \{zhjiang, soumiks\}@iastate.edu
}

\begin{document}

\maketitle

\begin{abstract}
Federated Direct Preference Optimization (FedDPO) enables privacy-preserving alignment of language models across distributed agents without centralizing sensitive preference data. However, existing FedDPO analysis assumes all clients converge to a single shared policy, which is fundamentally at odds with the heterogeneous preference distributions present in practice. We propose \textbf{Personalized FedDPO (pFedDPO)}, in which each client maintains a small private LoRA adapter atop a shared global base model. In pFedDPO, the base model is synchronized via FedAvg each round, while LoRA adapters remain local and capture client-specific preference patterns. We prove that this decomposition replaces the gradient diversity $\kappa^2$ in the FedDPO bound with a residual heterogeneity $\kappa^2_{\mathrm{eff}}$, which is no larger than $\kappa^2$ when adapters absorb client-specific directions, at the cost of an additive adapter-bias term that vanishes as adapters converge. Experiments on preference datasets show that pFedDPO consistently attains a lower gradient norm than vanilla FedDPO while remaining competitive in training loss and per-client reward accuracy.
\end{abstract}


\section{Introduction}

Aligning language models (LMs)~\cite{wang2023self} with human preferences is increasingly performed via Direct Preference Optimization (DPO)~\cite{rafailov2023direct}, which avoids explicit reward modeling by optimizing a classification-style loss over (chosen/rejected) response pairs. As AI systems proliferate to distributed and privacy-sensitive environments, e.g., healthcare devices~\cite{nguyen2022federated}, personal assistants~\cite{sa2025cefeel}, institutional silos~\cite{huang2023promoting}, preference data is naturally fragmented across users who cannot share their data due to regulatory constraints (HIPAA~\cite{moore2019review}, GDPR~\cite{li2019impact}). This motivates distributed DPO, and in particular the federated and decentralized variants recently analyzed theoretically in~\cite{jiang2026distributed}. A critical assumption for FedDPO in~\cite{jiang2026distributed} is that all $N$ clients are driven toward a common global policy. This is appropriate when preference distributions are homogeneous, but unrealistic in practice: a medical professional in an ICU and a casual user evaluating restaurant suggestions have genuinely different preferences that should not be averaged away. The standard FedDPO convergence bound contains an irreducible floor proportional to gradient diversity ($\kappa^2$), which captures exactly this inter-client heterogeneity and cannot be reduced by communicating more or training longer, for a fixed step size.

Parameter-efficient fine-tuning, and LoRA~\cite{hu2022lora} in particular, offers a natural solution: decompose each client’s model into a shared global base and a small private adapter. The base model learns the shared structure of preferences across all agents and is aggregated via FedAvg~\cite{li2019convergence}, while the adapter captures client-specific deviations and is never communicated. This decomposition is the key contribution of this work, differing from the method in~\cite{zhu2026fedpdpo}.

We make the following contributions:
We propose pFedDPO, a personalized federated DPO algorithm in which each client maintains a LoRA adapter atop a shared base model updated via FedAvg. We prove a convergence bound for pFedDPO in which $\kappa^2$ is replaced by $\kappa^2_{\mathrm{eff}}$ at the cost of an explicit adapter-bias term $C\varepsilon_\phi$, so the bound is tighter than vanilla FedDPO whenever the heterogeneity reduction outweighs this bias (Theorem~\ref{theorem_1}). On two preference datasets, pFedDPO consistently attains a lower gradient norm than vanilla FedDPO with competitive loss and per-client accuracy.


\section{Background and Problem Setup}
\noindent\textbf{Federated DPO.}
We follow the federated setting of~\cite{jiang2026distributed}. $N$ clients index $i\in[N]=\{1,2,...,N\}$ each hold a disjoint dataset $\mathcal{D}_i$ of preference pairs $(\tau^+,\tau^-)$. The per-client DPO loss is
\begin{equation}
\mathcal{L}^i_\theta=\mathbb{E}_{(\tau^+,\tau^-)\sim\mathcal{D}_i}[l_\theta(\tau^+,\tau^-)]
\end{equation}
where $l_\theta (\tau^+,\tau^-) = -\text{log}\sigma(\beta[\Delta_\theta(\tau^+,\tau^-)-\Delta_\text{ref}(\tau^+,\tau^-)])$, 
$\beta>0$ is the inverse temperature parameter, $\Delta_\theta(\tau^+,\tau^-):=\text{log}\pi_\theta(\tau^+)-\text{log}\pi_\theta(\tau^-)$, and $\Delta_\text{ref}(\tau^+,\tau^-):=\text{log}\pi_\text{ref}(\tau^+)-\text{log}\pi_\text{ref}(\tau^-)$, $\pi_\text{ref}$ is a reference policy.
The global objective is $\mathcal{L}_\theta=(1/N)\sum_i\mathcal{L}_\theta^i$. FedDPO runs $E\geq 1$ local gradient steps per client, aggregates via FedAvg, and repeats for $R$ rounds. The convergence bound (Theorem 5.1 of~\cite{jiang2026distributed}) contains an irreducible term proportional to $\kappa^2$, the gradient diversity: $\kappa^2:=(1/N)\sum_i\mathbb{E}[\|\nabla_\theta \mathcal{L}_\theta-\nabla_\theta\mathcal{L}^i_\theta\|^2]$, where $\|\cdot\|$ indicates the Euclidean norm. This term reflects how much each client's local gradient deviates from the global gradient, and is driven entirely by \textit{preference heterogeneity} across clients. It cannot be eliminated by algorithmic choices within vanilla FedDPO.

\noindent\textbf{LoRA.}
Low-Rank Adaptation (LoRA)~\cite{hu2022lora} reparameterizes a weight matrix $W\in\mathbb{R}^{m\times n}$ as $W+BA$, with $B\in\mathbb{R}^{m\times r}$, $A\in\mathbb{R}^{r\times n}$, $r\ll\min(m,n)$. Standard LoRA freezes $W$; in pFedDPO, $W$ belongs to the shared base $\theta$ and is also trained. For a $768\times768$ matrix and $r=8$, the adapter has 12,288 parameters versus 589,824. Adapters add no communication since they are never transmitted.
\section{Personalized FedDPO (pFedDPO)}
\textbf{Algorithm.}
Each client $i$ maintains a model $\pi_{\theta,\phi_i}$, where $\theta\in\mathbb R^d$ is the shared global base and $\phi_i=(A_i,B_i)$ is a private LoRA adapter. The effective policy is $\pi_\theta$ with its attention weights augmented by $B_iA_i$. We write $\mathcal{L}^i_{\theta,\phi_i}$ for client $i$'s loss (1) under $\pi_{\theta,\phi_i}$. The global base $\theta$ is synchronized via FedAvg every round, and adapters $\phi_i$ are never communicated. Thus, in each round $r$:
\begin{itemize}
    \item Server broadcasts current global base $\theta^r$ to selected clients $\mathcal{S}_r\subset [N]$, $|\mathcal{S}_r|=S$.
    \item Each client $i\in\mathcal{S}_r$ loads $\theta^r$, restores its private $\phi_i$ from local storage, and runs $E$ local DPO gradient steps updating both $\theta$ and $\phi_i$ jointly.
    \item Client uploads only the base model update $\Delta\theta_i=\theta_i^{r,E}-\theta^r$ (adapter $\phi_i$ remains private).
    \item Server aggregates $\theta^{r+1}=(1/|\mathcal{S}_r|)\sum_{i\in\mathcal{S}_r}(\theta^r+\Delta\theta_i)$.
\end{itemize}
The client's effective policy at inference is $\pi_{\theta^r,\phi_i}$, \textit{personalizing} to $\mathcal{D}_i$ while leveraging the shared structure learned across all agents. Each selected client uploads exactly one vector $\Delta\theta_i\in \mathbb{R}^{d}$ per round, the same dimension as vanilla FedDPO. The per-round communication volume is therefore $S\times d$ parameters in both cases. pFedDPO's advantage is not reduced communication volume but improved convergence quality: $\Delta\theta_i$ reflects gradient work on a base model partially decoupled from client-specific preference directions by $\phi_i$, yielding smaller effective heterogeneity $\kappa^2_\text{eff}$ (defined below) in the convergence bound. Please see Appendix for the motivation why LoRA adapters are kept private. Figure~\ref{fig:pfeddpo_diagram} shows the specific detail of how pFedDPO works schematically.
\begin{figure*}
    \centering
    \includegraphics[width=0.8\linewidth]{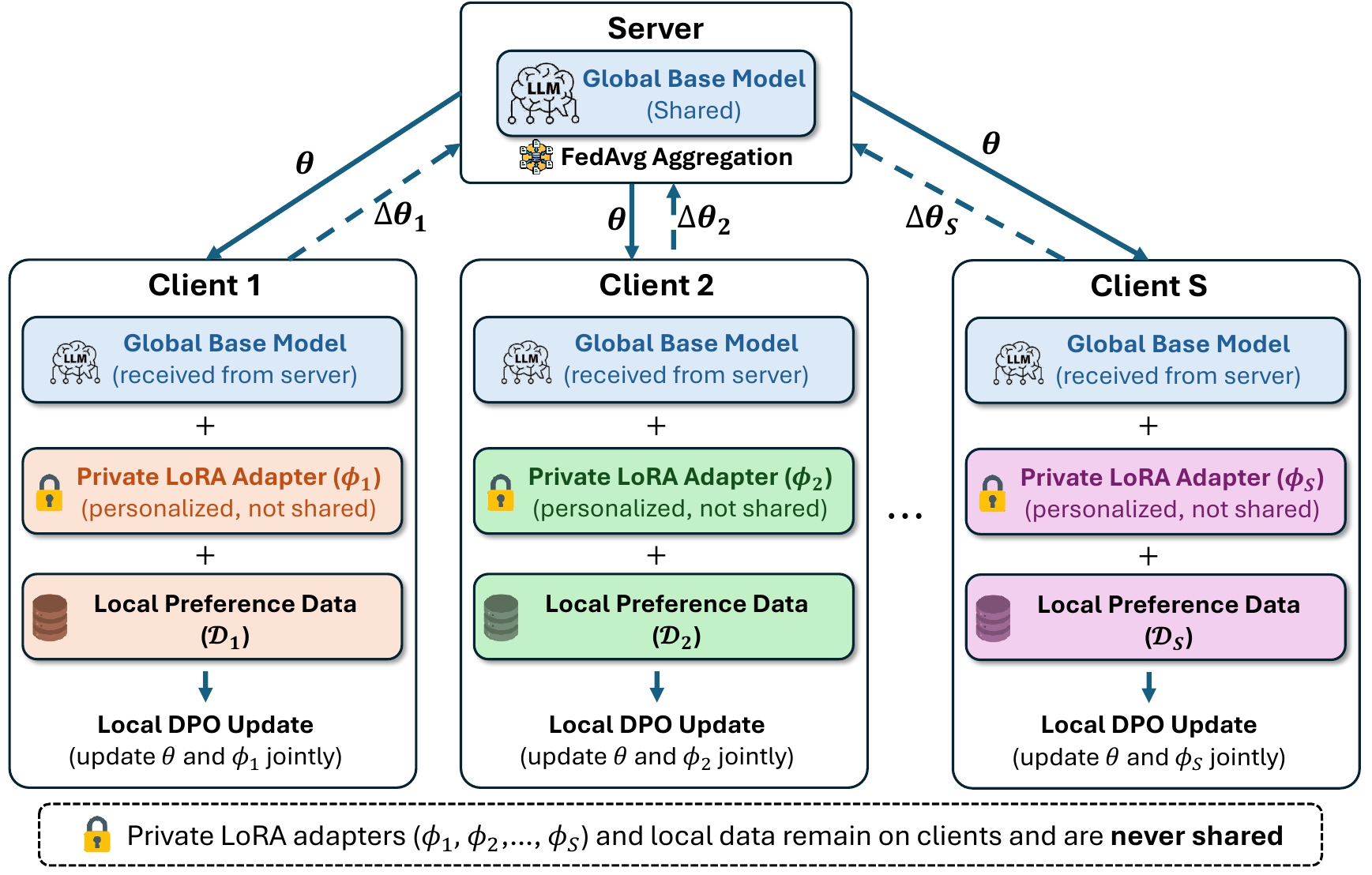}
    \caption{Illustration of the pFedDPO framework. The server maintains a shared global base model $\theta$. Each client $i$ receives $\theta$ from the server, combines it with its private LoRA adapter $\phi_i$ (never shared), and performs local DPO updates on its private dataset $\mathcal{D}_i$. Only the base model updates $\Delta\theta_i$ are uploaded to the server for FedAvg aggregation, while adapters and data remain local. This design is intended to reduce effective gradient diversity ($\kappa^2_\text{eff}\leq \kappa^2$ under the conditions in the Appendix) and enables personalization without additional communication overhead.}
    \label{fig:pfeddpo_diagram}
\end{figure*}

\noindent\textbf{Convergence Analysis.}
We analyze pFedDPO under the same assumptions as~\cite{jiang2026distributed} (Assumptions 3.4-4.1). The key insight is that \textit{the LoRA adapter absorbs a portion of the inter-client gradient divergence, reducing the effective heterogeneity seen by the base model update}.
Let $F_i(\theta):=\min_\phi\mathcal{L}^i_{\theta,\phi}$ and $F:=\frac1N\sum_iF_i$. By the envelope theorem, $\nabla F_i(\theta)=\nabla_\theta\mathcal{L}^i_{\theta,\phi_i^*}$ with $\phi_i^*\in\arg\min_\phi\mathcal{L}^i_{\theta,\phi}$, i.e., the base gradient after client $i$'s adapter has absorbed what it can express. We define the residual heterogeneity $\kappa^2_{\mathrm{eff}}:=\frac1N\sum_i\mathbb{E}[\|\nabla F(\theta)-\nabla F_i(\theta)\|^2]$. Since the adapter removes client-specific directions from the base gradient, $\kappa^2_{\mathrm{eff}}\le\kappa^2$ to first order under a shared-subspace condition (Appendix); we also track it empirically.
To characterize the main result, we impose an assumption in the following.
\begin{assumption}[Adapter Approximate Stationarity]\label{assumption_1}
Along the local iterates, $\mathbb{E}[\|\nabla_\phi\mathcal{L}^i_{\theta,\phi_i}\|^2]\le\varepsilon_\phi$; moreover, $\mathcal{L}^i_{\theta,\cdot}$ is $\mu_\phi$-PL in $\phi$ and $\nabla_\theta\mathcal{L}^i_{\theta\phi}$ is $L_{\theta,\phi}$-Lipschitz in $\phi$.
\end{assumption}
Because adapters are trained only $E$ steps per round, $\nabla_\theta\mathcal{L}^i_{\theta,\phi_i}$ is a biased estimate of $\nabla F_i(\theta)$. Assumption~1 bounds this bias by $(L_{\theta\phi}/\mu_\phi)^2\varepsilon_\phi$, giving $C\varepsilon_\phi$ with $C=\mathcal{O}(L_{\theta\phi}^2/\mu_\phi^2)$.
\begin{theorem} (pFedDPO Convergence)\label{theorem_1} Under Assumptions 3.4-4.1 of \cite{jiang2026distributed} and with LoRA rank $r\geq 1$, let $\eta\leq \text{min}\{1/(8LE), S/(16LNE)\}$, where $L>0$ is smoothness constant of $F$, then after $R$ rounds of pFedDPO:
    \begin{equation}\label{eq_2}
    \begin{split}
        &\frac{1}{R}\sum_{r=0}^{R-1}\mathbb E[\|\nabla F(\theta^r)\|^2]\leq \frac{2(F(\theta^0)-F^*)}{\eta ER} + \frac{8L\eta\zeta^2_g}{S}\\&+16L^2\eta^2E\kappa^2_\text{eff} + \frac{16L^2\eta^2E^2\zeta^2_g}{S}+C\varepsilon_\phi,
    \end{split}
    \end{equation}
where $\zeta^2_g$ is the variance of gradients, $F^*:=\text{inf}_\theta F(\theta)$.
\end{theorem}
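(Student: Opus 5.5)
The plan is to reduce pFedDPO to a biased-gradient instance of the vanilla FedDPO analysis (Theorem 5.1 of \cite{jiang2026distributed}), applied to the surrogate objective $F=\frac1N\sum_i F_i$ instead of $\mathcal{L}_\theta$. Each client's base update uses $g_i:=\nabla_\theta\mathcal{L}^i_{\theta,\phi_i}$ (plus sampling noise with variance $\zeta_g^2$). We decompose it as
\[
g_i=\nabla F_i(\theta)+b_i,\qquad b_i:=\nabla_\theta\mathcal{L}^i_{\theta,\phi_i}-\nabla_\theta\mathcal{L}^i_{\theta,\phi_i^*}.
\]
Everything except $b_i$ then matches the FedDPO setting with local objectives $F_i$. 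Their diversity is by definition $\kappa^2_{\mathrm{eff}}$, and $F$ is $L$-smooth by hypothesis.

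\textbf{Step 1: bias control.} By the $L_{\theta\phi}$-Lipschitzness in Assumption~\ref{assumption_1}, $\|b_i\|\le L_{\theta\phi}\,\mathrm{dist}(\phi_i,\arg\min_\phi\mathcal{L}^i_{\theta,\phi})$. The $\mu_\phi$-PL condition gives quadratic growth together with the error bound $\mathrm{dist}(\phi,\Phi^*)\le\|\nabla_\phi\mathcal{L}^i\|/\mu_\phi$. Combined with $\mathbb E\|\nabla_\phi\mathcal{L}^i\|^2\le\varepsilon_\phi$, this yields $\mathbb E\|b_i\|^2\le (L_{\theta\phi}/\mu_\phi)^2\varepsilon_\phi$ uniformly along the local iterates. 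The envelope identity $\nabla F_i=\nabla_\theta\mathcal{L}^i_{\theta,\phi_i^*}$ needs $\phi_i^*$ to be a valid selection. I will take it as given in the statement; if it fails, the same Lipschitz argument bounds the error.

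\textbf{Step 2: one-round descent.} We write $\theta^{r+1}-\theta^r=-\frac{\eta}{S}\sum_{i\in\mathcal S_r}\sum_{k<E}(\nabla F_i(\theta_i^{r,k})+b_i^{r,k}+\xi_i^{r,k})$ and apply $L$-smoothness of $F$. The inner product term is handled with $\langle a,c\rangle=\frac12(\|a\|^2+\|c\|^2-\|a-c\|^2)$, which produces $-\frac{\eta E}{2}\|\nabla F(\theta^r)\|^2$ plus a drift/bias term. The drift/bias term is at most $\frac{\eta}{2}\sum_k\mathbb E\|\frac1N\sum_i(\nabla F_i(\theta_i^{r,k})-\nabla F_i(\theta^r)+b_i^{r,k})\|^2$, which we split by Young's inequality into $L^2\times$(client drift) and $\mathbb E\|b\|^2$. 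The quadratic term $\frac L2\|\theta^{r+1}-\theta^r\|^2$ contributes the sampling/noise term $L\eta^2E\zeta_g^2/S$, using the partial participation variance with $\eta\le S/(16LNE)$. It also contributes a term that is absorbed into the descent by $\eta\le 1/(8LE)$.

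\textbf{Step 3: client drift.} The standard FedAvg recursion, run with $F_i$ in place of $\mathcal{L}^i$ and including the bias, gives
\[
\frac1N\sum_i\mathbb E\|\theta_i^{r,k}-\theta^r\|^2\lesssim \eta^2E^2\big(\kappa^2_{\mathrm{eff}}+\|\nabla F(\theta^r)\|^2+\zeta_g^2/S+C'\varepsilon_\phi\big).
\]
Here $\eta\le1/(8LE)$ makes the $(1+1/E)^E$ factor bounded. Substituting this into Step 2 yields the $16L^2\eta^2E\kappa^2_{\mathrm{eff}}$ and $16L^2\eta^2E^2\zeta_g^2/S$ terms. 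The extra $\|\nabla F\|^2$ contribution is absorbed by the step-size condition.

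\textbf{Step 4: telescoping.} Summing over $r$, dividing by $\eta ER/2$, and using $F\ge F^*$ gives \eqref{eq_2}. All bias contributions are collected into $C\varepsilon_\phi$ with $C=\mathcal O(L_{\theta\phi}^2/\mu_\phi^2)$.

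The main obstacle is Step 1 combined with Step 3. The bias is not zero-mean and is correlated with the drifting iterate $\theta_i^{r,k}$, so it cannot be averaged away across clients. The $\varepsilon_\phi$ assumption must also hold at the drifted $\theta_i^{r,k}$, not only at $\theta^r$. My approach is to read Assumption~\ref{assumption_1} as holding along all local iterates, as it is phrased, and to apply Young's inequality crudely so that the bias enters additively. That costs a constant factor but avoids needing any independence.
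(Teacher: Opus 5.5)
Your plan follows the paper's proof step for step. It uses the same decomposition $\nabla_\theta\mathcal{L}^i_{\theta,\phi_i}=\nabla F_i(\theta)+b_i$. It bounds the bias the same way, combining the PL error bound with cross-Lipschitzness to get $\mathbb{E}\|b_i\|^2\le(L_{\theta\phi}/\mu_\phi)^2\varepsilon_\phi$ along the local iterates. The descent inequality is the same, with the bias entering additively; your polarization identity is the paper's $\langle a,a+b\rangle\ge\frac12\|a\|^2-\frac12\|b\|^2$. The drift analysis also matches: FedDPO's lemmas with $\kappa^2\to\kappa^2_{\mathrm{eff}}$, and the bias' $\mathcal{O}(L^2\eta^2E^2)\varepsilon_\phi$ drift contribution absorbed using $\eta\le1/(8LE)$. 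Both end by telescoping.

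The one step that does not deliver what you claim is Step~3. Suppose the drift obeys $\frac1N\sum_i\mathbb{E}\|\theta_i^{r,k}-\theta^r\|^2\lesssim\eta^2E^2(\kappa^2_{\mathrm{eff}}+\cdots)$. Insert it into the $L^2\times$(drift) term of Step~2, sum over $k<E$, and divide by $\eta E/2$. The result is a heterogeneity term of order $L^2\eta^2E^2\kappa^2_{\mathrm{eff}}$, one power of $E$ more than the $16L^2\eta^2E\kappa^2_{\mathrm{eff}}$ in \eqref{eq_2}.

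The noise term has a similar problem. The stochastic part of a single client's drift is the sum of that client's own $k$ noise terms, of order $\eta^2E\zeta_g^2$ with no factor $1/S$. The $1/S$ only arises from averaging over the $S$ sampled clients at aggregation, which is where your Step~2 correctly picks it up. So a generic FedAvg drift recursion proves a bound of the same shape but with different $E$- and $S$-dependence, not \eqref{eq_2} as stated.

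The paper obtains the stated terms only by invoking Lemmas C.1--C.5 of \cite{jiang2026distributed} unchanged with $\nabla\mathcal{L}^i\to\nabla F_i$, so the constants are inherited from Theorem~5.1 there. You should do the same, or else state the bound your recursion actually yields.

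Finally, your Young split uses $\|\nabla F_i(\theta_i^{r,k})-\nabla F_i(\theta^r)\|\le L\|\theta_i^{r,k}-\theta^r\|$, i.e., smoothness of every $F_i$. The statement only posits smoothness of $F$. The paper relies on this implicitly as well. It can be justified from the PL and cross-Lipschitz conditions of Assumption~\ref{assumption_1} plus $\theta$-smoothness of $\mathcal{L}^i$, giving a constant of the form $L'+L_{\theta\phi}^2/\mu_\phi$, where $L'$ is that $\theta$-smoothness constant.
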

\noindent\textbf{\textit{Proof sketch}.} We apply Thm.~5.1 of (Jiang 2026) to $F$. Each client's base gradient decomposes as $\nabla_\theta\mathcal{L}^i_{\theta,\phi_i}=\nabla F_i(\theta)+b_i$ with $\|b_i\|^2\le(L_{\theta\phi}/\mu_\phi)^2\varepsilon_\phi$ (Assumption~1). The heterogeneity of $\{\nabla F_i\}$ is $\kappa^2_{\mathrm{eff}}$, so Lemmas C.1--C.5 of (Jiang 2026) apply with $\kappa^2\to\kappa^2_{\mathrm{eff}}$, and the bias adds $C\varepsilon_\phi$ after telescoping. Since adapters are never aggregated, $\nabla_\phi\mathcal{L}^i$ does not enter the drift. Full proof in Appendix.


Compared to vanilla FedDPO (Theorem 5.1 of \cite{jiang2026distributed}), $\kappa^2$ is replaced by $\kappa^2_\text{eff}$ (improvement), and $C\varepsilon_\phi\geq 0$ is added (cost). The net bound is tighter than vanilla FedDPO whenever the reduction $16L^2\eta^2E(\kappa^2-\kappa^2_\text{eff})$ exceeds $C\varepsilon_\phi$. We expect both the improvement and the cost to grow with $r$: higher $r$ can reduce $\kappa^2_\text{eff}$ further, but larger adapters may be harder to converge in $E$ steps, increasing $\varepsilon_\phi$. 
This suggests a non-monotone relationship between $r$ and performance, with an optimal rank balancing the two effects.
\section{Experiments}
We evaluate on Stanford SHP~\cite{esfandiari2021cross} (main result) and Anthropic hh-rlhf~\cite{bai2022training} (in Appendix), each with $N = 5$ agents holding 120 disjoint non-IID preference pairs. The backbone is DistilGPT-2 ($\approx$82M parameters). LoRA targets the attention projection (``\texttt{c\_attn}'') with default rank $r = 8$ and $\alpha = 16$ (the scaling parameter). We compare three methods: Centralized DPO (pooled data), Vanilla FedDPO (full model aggregation, $S=5$), and pFedDPO (base + private LoRA, $S=5$). All methods use AdamW ($\eta=10^{-4}$), $\beta=0.2$, gradient clipping at 1.0, batch size 4, $E = 3$ local steps, and $R = 80$ communication rounds. A full list is in Appendix.

\noindent\textbf{Comparative study.} As Figure~\ref{fig:comparative_shp} shows, centralized DPO (dashed grey) shows the highest gradient norm, highest and most unstable training loss (oscillating between 0.0 and 1.4), and the lowest preference accuracy ($\sim$0.75 at round 80, accuracy on each agent's training pairs). Both vanilla FedDPO and pFedDPO substantially outperform it. This is not a failure of centralized training in principle, but a consequence of the experimental regime: centralized DPO processes batch size 4 per gradient step, while each FedDPO round aggregates across $S=5$ agents (effective batch 20). This places the experiment in the computation-dominated regime of Theorem 5.1 in~\cite{jiang2026distributed}, where larger effective local computation per round accelerates convergence. Centralized DPO is therefore not an oracle in this setup. 
It thus serves as a single-learner reference, consistent with the $\zeta^2_g/S$ term in Theorem~\ref{theorem_1}.
pFedDPO attains a lower gradient norm than vanilla FedDPO with comparable loss and accuracy.
\begin{figure}
    \centering
    \includegraphics[width=1\linewidth]{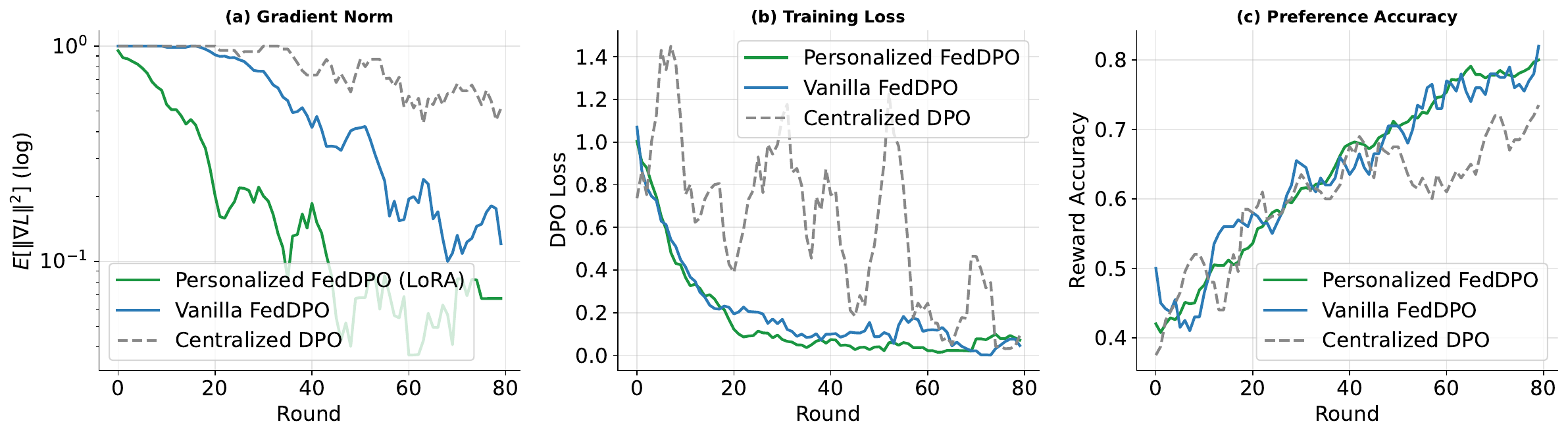}
    \caption{Comparison of baselines with SHP data.}
    \label{fig:comparative_shp}
\end{figure}
\begin{figure}
    \centering
    \includegraphics[width=1\linewidth]{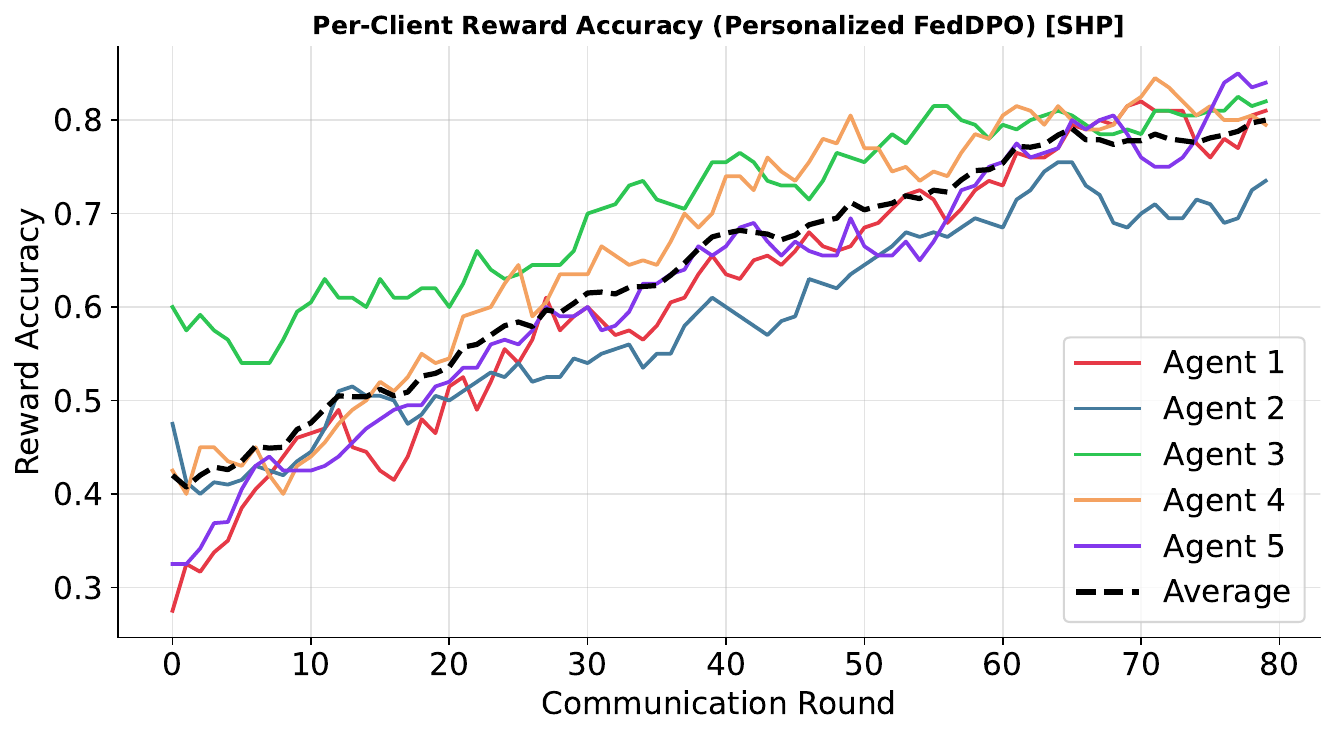}
    \caption{Per-client preference accuracy with SHP data.}
    \label{fig:per_client_shp}
\end{figure}
Additionally, Figure~\ref{fig:per_client_shp} depicts that all five agents improve from $\sim$0.30–0.60 at round 0 to $\sim$0.75–0.85 by round 80 under pFedDPO. The spread at round 80 is roughly 10 percentage points (Agent 5 highest at $\sim$0.85, Agent 2 lowest at $\sim$0.75), consistent with heterogeneous preferences across agents rather than a single dominant agent driving the average. The mean accuracy (dashed) reaches approximately 0.80.

\begin{figure}
    \centering
    \includegraphics[width=1\linewidth]{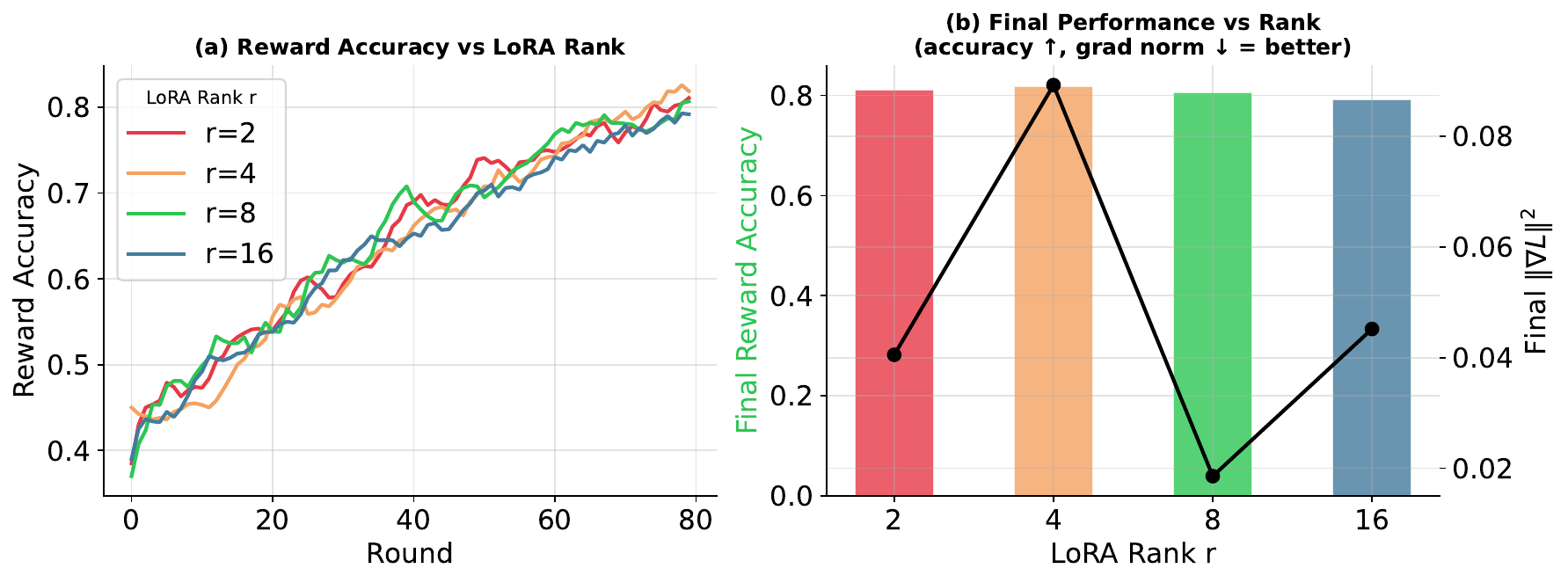}
    \caption{Low rank $r$ ablation with SHP data.}
    \label{fig:rank_shp}
\end{figure}
\noindent\textbf{LoRA Rank Ablation.} Figure~\ref{fig:rank_shp} ablates LoRA rank $r\in\{2, 4, 8, 16\}$. In panel (a), reward accuracy curves for all four ranks are broadly similar ($\sim$0.78–0.82 at round 80), with modest differences across ranks. Panel (b) reveals a more nuanced picture: $r = 4$ achieves the highest final reward accuracy ($\sim$0.82) but with the highest gradient norm, while $r = 8$ achieves the lowest final gradient norm ($\sim$0.02). $r = 16$ shows a higher gradient norm than $r = 8$, and $r = 16$ has the lowest bar in final accuracy. This non-monotone relationship between rank and accuracy is consistent with the trade-off suggested by Theorem~\ref{theorem_1}: higher rank reduces $\kappa^2_\text{eff}$ (improving convergence) but increases the adapter-bias term $C\varepsilon_\phi$ (harder to converge a larger adapter in $E = 3$ local steps). The optimal rank balances these two competing effects, and the data suggests $r = 4$ or $r = 8$ is near-optimal for this model and data regime. We use $r = 8$ as the default since it achieves the best gradient norm, which is the primary metric in our theoretical analysis. 

\begin{figure}
    \centering
    \includegraphics[width=1\linewidth]{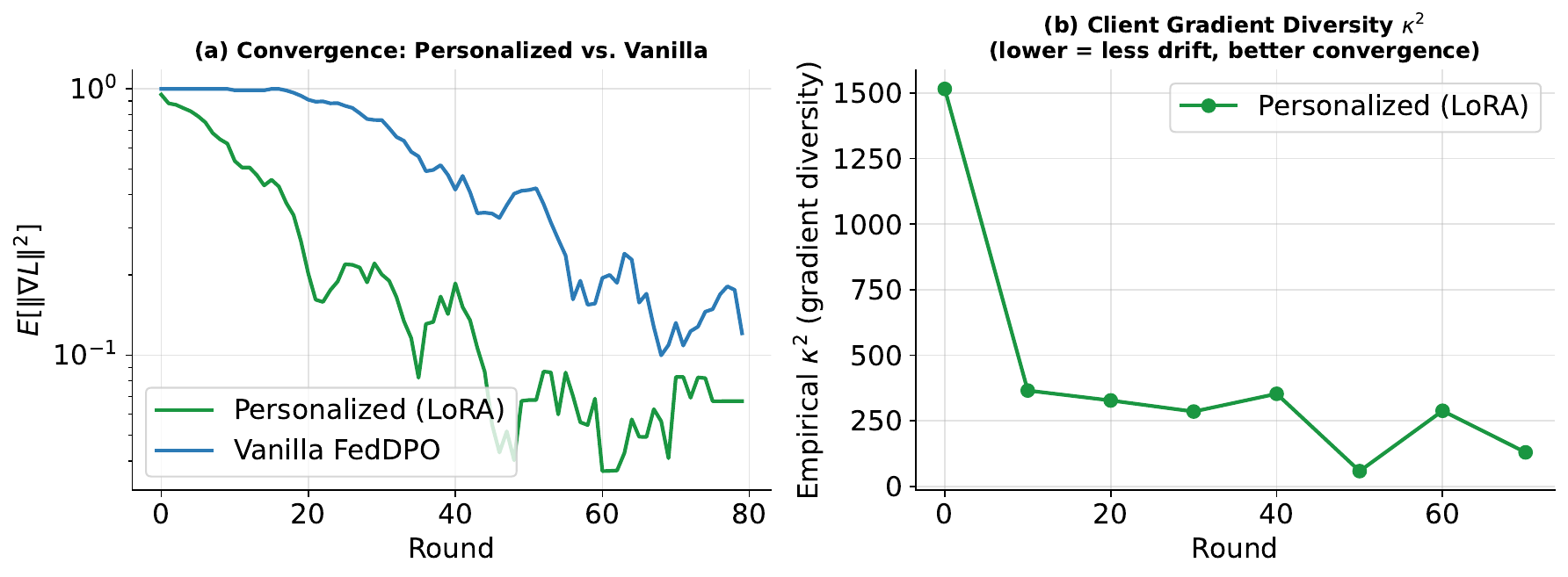}
    \caption{Gradient diversity $\kappa^2$ ablation with SHP data.}
    \label{fig:diversity_shp}
\end{figure}

\begin{figure}
    \centering
    \includegraphics[width=1\linewidth]{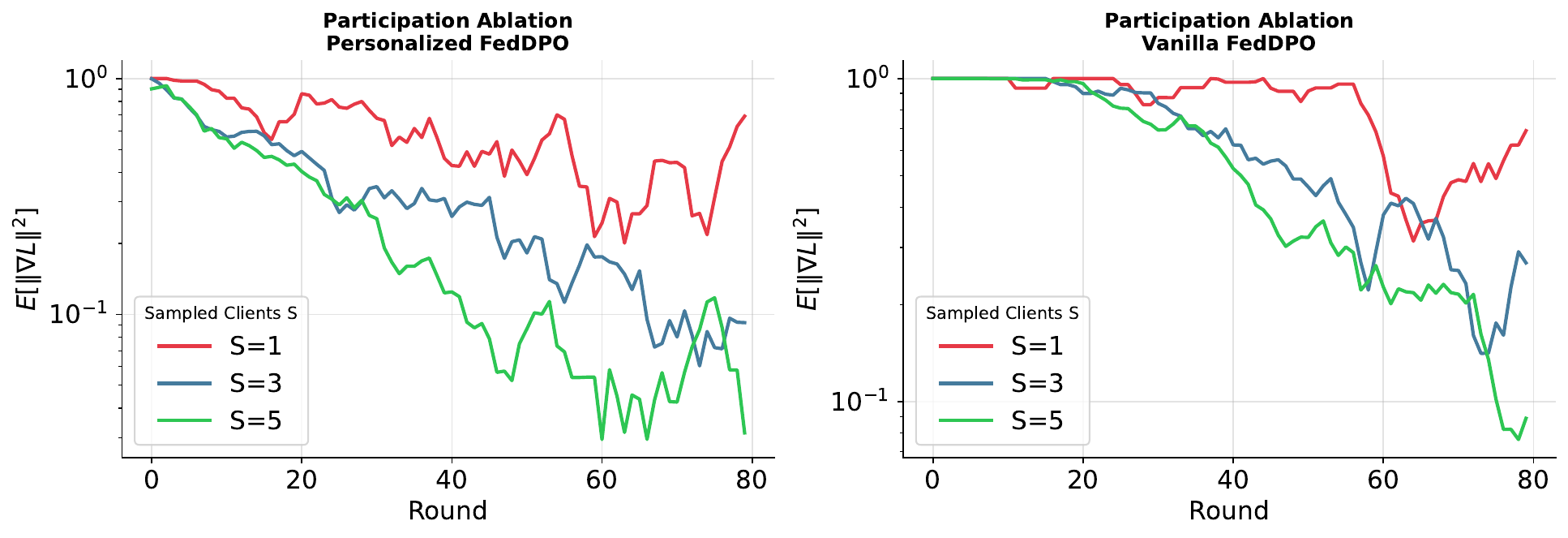}
    \caption{Participation $S$ ablation with SHP data.}
    \label{fig:participation_shp}
\end{figure}

\noindent\textbf{Gradient Diversity and Participation Ablation.} As shown in Figure~\ref{fig:diversity_shp}(a), pFedDPO consistently achieves a lower gradient norm than vanilla FedDPO across all 80 rounds, with the gap largest in the later rounds as adapters accumulate client-specific knowledge. Figure~\ref{fig:diversity_shp}(b) shows the empirical $\kappa^2$ measured every 10 rounds on pFedDPO's base model gradients. Starting from $\sim$1500 at round 0, $\kappa^2$ drops sharply to $\sim$350 by round 10 and continues to decrease gradually to approximately 150–200 by round 70. 
The decrease is not monotone (minor fluctuations around rounds 40--50) and partly reflects overall training progress. At initialization the adapters are zero ($B_i=0$), so the starting value coincides with vanilla FedDPO's $\kappa^2$. The $\sim$85--90\% reduction is consistent with the $\kappa^2\to\kappa^2_{\mathrm{eff}}$ mechanism of Theorem~\ref{theorem_1}.
Figure~\ref{fig:participation_shp} shows the participation ablation $S \in\{1, 3, 5\}$ for both pFedDPO (left) and vanilla FedDPO (right). In both cases the ordering $S = 5 > S = 3 > S = 1$ holds, consistent with the $\zeta_g^2/S$ variance floor in Theorem 5.1 of \cite{jiang2026distributed}. At $S = 1$, both methods struggle: the gradient norm remains near 1.0 throughout with high variance for pFedDPO, and similarly for vanilla FedDPO. The advantage of personalization is clearest at $S = 5$ and $S = 3$, where pFedDPO's lower gradient norm floor is visible. At $S = 1$ the $\zeta^2_g/S$ variance term dominates and the personalization benefit from $\kappa^2_\text{eff}$ is masked.

\section{Related Work}
DPO~\cite{rafailov2023direct} and its extensions~\cite{amini2024direct,park2024disentangling,xiao2024cal} focus on the centralized, single-model setting. FedDPO and DecDPO (decentralized DPO) were proposed and analyzed in~\cite{jiang2026distributed}, establishing the first convergence guarantees for preference optimization in distributed settings, where prior work focuses primarily on learning~\cite{saadati2024dimat,saadati2025foundation,esfandiari2021cross} and more recently agentic AI systems~\cite{rillo2026adko}. Our work extends this by addressing the personalization gap they identify as a limitation. Existing federated DPO approaches~\cite{spadea2025federated,zhu2026fedpdpo} do not address this theoretical gap: they either treat all clients symmetrically without personalization~\cite{spadea2025federated} or introduce personalization empirically without bounding how it improves convergence~\cite{zhu2026fedpdpo}.
Personalized federated learning with LoRA has been studied in~\cite{yi2023pfedlora,qi2024fdlora,song2026pfl,gu2026d,lee2026fedp2eft}, but not in the context of preference optimization, where the trajectory-level DPO objective introduces additional variance and heterogeneity that require explicit analysis. We refer interested readers to Appendix for more related work and include a methodological comparison table.

\section{Conclusion}
We propose pFedDPO, a personalized federated DPO framework where each client trains a private LoRA adapter on top of a shared global base model. The base model is synchronized across clients each round, while adapters remain local, preserving personalization without increasing communication compared with standard FedDPO. Our analysis shows this design better handles client heterogeneity while introducing only a small bias that diminishes over training. Empirically, pFedDPO consistently lowers the gradient norm relative to vanilla FedDPO while keeping preference accuracy competitive across all clients. 

\section*{Acknowledgments}
This work was supported by the COALESCE: COntext Aware LEarning for Sustainable CybEr-Agricultural Systems (NSF Frontier \#1954556).

\bibliography{aaai2027}

\clearpage

\section{Additional Related Work}\label{sec:additional_related_work}
FedDPO~\cite{jiang2026distributed} establishes the first convergence guarantees for distributed DPO under full and partial participation, staleness, and decentralized topologies, but targets a single shared policy and does not support personalization. Its bound contains an irreducible $\kappa^2$ floor that our work directly reduces.

Spadea and Seneviratne~\cite{spadea2025federated} compare Kahneman-Tversky Optimization (KTO) and DPO in a federated LLM fine-tuning setting using Alpaca-7B. They show that KTO outperforms DPO across MT-Bench-1, Vicuna, and AdvBench benchmarks, particularly under redistributed (non-paired) data where DPO cannot be applied. Their work is purely empirical with no convergence analysis, and does not address personalization: all clients train toward a single global model. The key contribution is demonstrating KTO's resilience to non-IID data due to its single-response feedback requirement, which sidesteps the paired-response constraint of DPO.

FedPDPO~\cite{zhu2026fedpdpo} uses a globally shared LoRA adapter with a personalized client-specific LLM head, a client-specific explicit reward head, and a bottleneck adapter to balance global and local features. Its theory is described as establishing a probabilistic foundation and soundness.
It transmits shared LoRA updates of dimension $\mathcal{O}(r(m+n))$ per client, achieving up to 4.80\% average accuracy improvement. However, its adapters are globally shared rather than private, and its theory does not provide a rate-explicit bound relating personalization to $\kappa^2$. pFedDPO makes the opposite trade-off: full $d$-dimensional base updates (same volume as vanilla FedDPO) in exchange for fully private adapters and an explicit convergence guarantee.

The key differentiator of pFedDPO is therefore with a rate-explicit convergence bound that makes the effect of personalization explicit. Unlike FedPDPO, pFedDPO never transmits adapter weights, keeping all personalization parameters on-device, at a higher upload volume than FedPDPO. Unlike Spadea and Seneviratne's work~\cite{spadea2025federated}, pFedDPO explicitly personalizes to each client’s preference distribution. Please see Table~\ref{tab:fed_comparison} for methodological comparison.

\section{Discussion on pFedDPO}\label{sec:diagram}
Figure~\ref{fig:pfeddpo_diagram} shows the specific detail of how pFedDPO works schematically. A closely related line of work~\cite{cho2024heterogeneous,saadati2025foundation,kuo2024federated,sun2024improving,yi2023pfedlora,bian2025lora,bian2026fedalt,xie2026fedlodrop} applies parameter-efficient federated fine-tuning by training LoRA adapters locally and periodically aggregating or averaging those adapters across clients. In contrast, \textit{pFedDPO intentionally keeps LoRA adapters private and only synchronizes the shared base model}. This design is motivated by the nature of preference alignment, where client heterogeneity reflects genuine differences in user values, style, or alignment objectives rather than noise to be averaged away.

When LoRA adapters are shared globally, they must simultaneously encode both:
\begin{itemize}
    \item task-level adaptation of the shared model, and
    \item client-specific preference information.
\end{itemize}
Under heterogeneous preference distributions, these objectives can conflict. Averaging adapters across clients may therefore dilute personalized signals, causing \textit{preference interference}: updates that improve alignment for one client can harm another. This is especially pronounced in DPO, where gradients are driven by pairwise preference comparisons and are highly sensitive to local preference distributions. pFedDPO avoids this by separating these roles:
\begin{itemize}
    \item the base model captures globally transferable alignment knowledge shared across clients;
    \item the private LoRA adapter captures client-specific residual preferences.
\end{itemize}
This decomposition makes the global model responsible for learning common alignment structure, while the adapters absorb personalization. As a result, the server never aggregates incompatible preference-specific parameters.

A second advantage is privacy. LoRA adapters trained on preference data can encode highly sensitive information about user behavior, style, or preference distributions. Uploading them, even though lightweight, may leak personalization signals. Keeping adapters local provides a stronger privacy boundary than methods that transmit adapters to the server. From an optimization perspective, private adapters also reduce effective client drift. Since some heterogeneity is absorbed locally, the shared model sees less conflicting gradient signal during aggregation, which leads to more stable federated optimization. This is reflected in our analysis through a smaller effective heterogeneity term relative to vanilla FedDPO~\cite{jiang2026distributed}. Finally, communication remains unchanged relative to standard FedDPO: only updates to the shared base model are transmitted. Compared with adapter-sharing methods, pFedDPO therefore offers a better tradeoff between:
\begin{itemize}
    \item global coordination (via shared base model),
    \item local personalization (via private adapters),
    \item privacy preservation, and
    \item robustness to preference heterogeneity.
\end{itemize}

\setlength{\tabcolsep}{3pt}
\begin{table*}[htbp]
\centering
\caption{Comparison of federated fine-tuning methods. \cmark=yes; \xmark=no; $\sim$=partial. Comm.Vol.=per-round communication volume per selected client. pFedDPO transmits $d$-dimensional base updates (same as FedDPO), not LoRA dimensions. Its distinguishing property is a rate-explicit bound capturing how private adapters reduce $\kappa^2$ to $\kappa^2_\text{eff}$.}
\footnotesize  
\begin{tabularx}{\textwidth}{XXXXXXX}
\toprule
Method & Base Model & Personalization & Conv. Theory & Comm. Vol. & Feedback & Aggregation \\
\midrule
\cite{jiang2026distributed} 
  & Full model 
  & \xmark\ (shared policy) 
  & \cmark\ rate bound ($\kappa^2$ floor) 
  & $d$ per client 
  & Paired 
  & FedAvg (full model) \\
\addlinespace
\cite{spadea2025federated} 
  & Full model (Alpaca-7B) 
  & \xmark\ (uniform) 
  & \xmark\ (empirical only) 
  & $d$ per client 
  & KTO: single; DPO: paired 
  & FedAvg (full model) \\
\addlinespace
\cite{zhu2026fedpdpo}
  & Shared LoRA + bottleneck adapter 
  & \cmark\ (private LLM head + reward head) 
  & \textasciitilde\ (probabilistic soundness only) 
  & $\mathcal{O}((m+n)r)$ per client (LoRA dims) 
  & Paired + explicit reward 
  & FedAvg on shared modules; heads private \\
\addlinespace
\textbf{pFedDPO (ours)} 
  & Base + private LoRA per client 
  & \cmark\ (private $\phi_i$, never shared) 
  & \cmark\ rate bound ($\kappa^2_\text{eff}$+$C\varepsilon_\phi$) 
  & $d$ per client (same as FedDPO) 
  & Paired 
  & FedAvg on base only; adapter stays local \\
\bottomrule
\end{tabularx}
\label{tab:fed_comparison}
\end{table*}


\section{Hyperparameters}\label{sec:hyperparameter}
The specific hyperparameters used in the empirical evaluation are listed in Table~\ref{tab:hyperparameters}.
\begin{table}[h]
    \caption{Hyperparameter Setup.}
    \centering
    \begin{tabular}{c|c}
        \toprule
        Hyperparameter & Value \\
        \midrule
        Model & DistilGPT-2 ($\sim82$ M)\\
        \# Agents (N) & 5 \\
        Training samples per agent  & 120 \\
        Inverse temperature $\beta$ & 0.2\\
        Optimizer & AdamW\\
        Learning rate $\eta$ & $1e^{-4}$\\
        Gradient clipping norm & 1.0\\
        Communication rounds & 80\\
        Local steps ($E$) & 3\\
        Max sequence length & 64 tokens\\
        Default participation $S$ & 5 (full)\\
        LoRA rank $r$ & 8\\
        LoRA $\alpha$ & 16\\
        LoRA dropout & 0.05\\
        LoRA target modules & \texttt{c\_attn}\\
        Comm.Volume (pFedDPO) & $S\times d=5\times d$\\
        \bottomrule
    \end{tabular}
    \label{tab:hyperparameters}
\end{table}
\section{Proof of Theorem~\ref{theorem_1}}\label{sec:proof_theorem_1}
The proof follows the structure of Theorem 5.1 in~\cite{jiang2026distributed}. We highlight only the modification due to LoRA personalization.
\begin{proof}
    Let $b_i^{r,e}:=\nabla_\theta\mathcal{L}^i_{\theta_i^{r,e},\phi_i^{r,e}}-\nabla F_i(\theta_i^{r,e})$.
(i) Bias: $\mu_\phi$-PL implies the error bound $\mathrm{dist}(\phi,\arg\min)\le\|\nabla_\phi\mathcal{L}^i\|/\mu_\phi$; with cross-Lipschitzness, $\mathbb{E}\|b_i^{r,e}\|^2\le(L_{\theta\phi}/\mu_\phi)^2\varepsilon_\phi$.
(ii) Descent: by $L$-smoothness of $F$ and $\langle a,a+b\rangle\ge\frac12\|a\|^2-\frac12\|b\|^2$, the bias adds $\frac{\eta E}{2}\bar b^2$ per round; dividing by $\eta ER/2$ gives $c_0(L_{\theta\phi}/\mu_\phi)^2\varepsilon_\phi=:C\varepsilon_\phi$.
(iii) Drift: Lemmas C.1--C.5 of (Jiang 2026) with $\nabla\mathcal{L}^i\to\nabla F_i$ give heterogeneity $\kappa^2_{\mathrm{eff}}$; the bias' drift contribution is $\mathcal{O}(L^2\eta^2E^2)\varepsilon_\phi$, absorbed into $C\varepsilon_\phi$ for $\eta\le1/(8LE)$.

\end{proof}
To facilitate the understanding of the proof in the above, we include another detailed analysis on why $\kappa^2_\text{eff}\leq \kappa^2$ and present the intuition behind it.

\section{Why $\kappa^2_{\mathrm{eff}} \le \kappa^2$: Technical Analysis and Intuition}
\label{app:kappa_analysis}

\subsection{Setup}
 
Partition the base parameters as $\theta=(\theta_{\mathcal{A}},\theta_{\mathcal{N}})$, where $\theta_{\mathcal{A}}=(W_1,\dots,W_M)$ collects the LoRA-adapted matrices $W_\ell\in\mathbb{R}^{m\times n}$ and $\theta_{\mathcal{N}}$ all remaining parameters. In our experiments, $\theta_{\mathcal{A}}$ consists of the $M=6$ \texttt{c\_attn} matrices of DistilGPT-2 ($m=768$, $n=2304$), and $\theta_{\mathcal{N}}$ contains the MLP, embedding, and LayerNorm parameters. Client $i$'s adapter is $\phi_i=\{(B_{i,\ell},A_{i,\ell})\}_{\ell=1}^M$ with $B_{i,\ell}\in\mathbb{R}^{m\times r}$, $A_{i,\ell}\in\mathbb{R}^{r\times n}$, and effective weights $W_\ell+sB_{i,\ell}A_{i,\ell}$, where $s=\alpha/r$.
 
\paragraph{Adapters as private low-rank shifts.}
Let $\Gamma_i\in\mathbb{R}^d$ place $sB_{i,\ell}A_{i,\ell}$ on block $\ell$ of $\theta_{\mathcal{A}}$ and zeros on $\theta_{\mathcal{N}}$. Then the adapted model is simply the base model evaluated at shifted weights:
\begin{equation}\label{app:eq:shift}
\mathcal{L}^i_{\theta,\phi_i}=\mathcal{L}^i_{\theta+\Gamma_i},\qquad
\nabla_\theta\mathcal{L}^i_{\theta,\phi_i}=\nabla\mathcal{L}^i_{\theta+\Gamma_i}.
\end{equation}
Vanilla FedDPO corresponds to $\Gamma_i=0$, which is also the standard LoRA initialization ($B_{i,\ell}=0$).
 
\paragraph{The two heterogeneity constants.}
Write $g_i:=\nabla\mathcal{L}^i_\theta$ for client $i$'s unadapted gradient and $\bar g:=\nabla\mathcal{L}_\theta=\frac1N\sum_ig_i$. For a family $\{v_i\}_{i=1}^N$ of random vectors, define the norm
\[
\big\|\{v_i\}\big\|_N:=\Big(\tfrac1N\textstyle\sum_{i=1}^N\mathbb{E}\|v_i\|^2\Big)^{1/2},
\]
which satisfies the triangle inequality. The vanilla gradient diversity is
\begin{equation}\label{app:eq:kappa}
\kappa^2=\big\|\{\bar g-g_i\}\big\|_N^2=\frac1N\sum_{i=1}^N\mathbb{E}\big[\|\nabla\mathcal{L}_\theta-\nabla\mathcal{L}^i_\theta\|^2\big].
\end{equation}
For pFedDPO, let $F_i(\theta):=\min_\phi\mathcal{L}^i_{\theta,\phi}$ and $F:=\frac1N\sum_iF_i$, with minimizer $\phi_i^*$ and shift $\Gamma_i^*$. We assume the minimizing shift $\Gamma_i^*$ is unique, so that $F_i$ is differentiable and, by the envelope theorem, $\nabla F_i(\theta)=\nabla_\theta\mathcal{L}^i_{\theta,\phi_i^*}=\nabla\mathcal{L}^i_{\theta+\Gamma_i^*}$. The pair $(B,A)$ itself is never unique, since $(BQ,Q^{-1}A)$ gives the same shift for any invertible $Q$. Nothing below depends on that choice. The residual heterogeneity is
\begin{equation}\label{app:eq:kappaeff}
\kappa^2_{\mathrm{eff}}=\big\|\{\nabla F-\nabla F_i\}\big\|_N^2=\frac1N\sum_{i=1}^N\mathbb{E}\big[\|\nabla F(\theta)-\nabla F_i(\theta)\|^2\big].
\end{equation}
Finally, we split $\kappa^2=\kappa^2_{\mathcal{A}}+\kappa^2_{\mathcal{N}}$ into the heterogeneity carried by adapted blocks and by non-adapted parameters, respectively.
 
\subsection{An Exact Identity at Adapter Stationarity}
 
For block $\ell$ of client $i$, let $P_{i,\ell}$ be the orthogonal projector onto the column space of $B_{i,\ell}$ and $Q_{i,\ell}=A_{i,\ell}^\top(A_{i,\ell}A_{i,\ell}^\top)^{+}A_{i,\ell}$ the projector onto the row space of $A_{i,\ell}$. The adapter's tangent space is
\[
\mathcal{T}_{i,\ell}:=\{B_{i,\ell}X+YA_{i,\ell}:\ X\in\mathbb{R}^{r\times n},\ Y\in\mathbb{R}^{m\times r}\},
\]
of dimension $r(m+n)-r^2$ when $B_{i,\ell},A_{i,\ell}$ have full rank $r$. It depends only on the column and row spaces, and hence is invariant to the reparameterization above. In the Frobenius inner product, $Z\perp\mathcal{T}_{i,\ell}$ if and only if there exists a matrix $Z$ such that $B_{i,\ell}^\top Z=0$ and $ZA_{i,\ell}^\top=0$. The orthogonal projectors onto $\mathcal{T}_{i,\ell}^{\perp}$ and $\mathcal{T}_{i,\ell}$ are
\begin{equation}\label{app:eq:proj}
\begin{split}
&\Pi^\perp_{i,\ell}(Z)=(I-P_{i,\ell})\,Z\,(I-Q_{i,\ell}),\\&
\Pi_{i,\ell}(Z)=P_{i,\ell}Z+(I-P_{i,\ell})ZQ_{i,\ell}.
\end{split}
\end{equation}
Let $\Pi_i$ act as $\Pi_{i,\ell}$ on each block of $\theta_{\mathcal{A}}$ and as zero on $\theta_{\mathcal{N}}$, and let $\Pi_i^\perp:=I-\Pi_i$, which is the identity on $\theta_{\mathcal{N}}$. Since $B_{i,\ell}A_{i,\ell}\in\mathcal{T}_{i,\ell}$, we have $\Gamma_i\in\mathrm{range}(\Pi_i)$.
 
\begin{lemma}[Stationarity identity]\label{app:lem:stat}
Let $G_{i,\ell}$ denote block $\ell$ of $\nabla_\theta\mathcal{L}^i_{\theta,\phi_i}$. Then $\nabla_{B_{i,\ell}}\mathcal{L}^i_{\theta,\phi_i}=s\,G_{i,\ell}A_{i,\ell}^\top$ and $\nabla_{A_{i,\ell}}\mathcal{L}^i_{\theta,\phi_i}=s\,B_{i,\ell}^\top G_{i,\ell}$. Consequently, if $\nabla_\phi\mathcal{L}^i_{\theta,\phi_i}=0$, then
\begin{equation}\label{app:eq:stat}
\nabla_\theta\mathcal{L}^i_{\theta,\phi_i}=\Pi_i^\perp\,\nabla_\theta\mathcal{L}^i_{\theta,\phi_i}.
\end{equation}
In particular, $\nabla F_i(\theta)=\Pi^\perp_{i}\nabla F_i(\theta)$ with $\Pi_i$ built from $\phi_i^*$. That is, the base gradient after adaptation has no component in the adapter's tangent space.
\end{lemma}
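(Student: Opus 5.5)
The plan is a chain-rule computation followed by a linear-algebra characterization of $\mathcal{T}_{i,\ell}^\perp$.

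\textbf{Adapter gradients.} Use the shift representation \eqref{app:eq:shift}: the loss depends on $B_{i,\ell}$ and $A_{i,\ell}$ only through the effective weight $W_\ell+sB_{i,\ell}A_{i,\ell}$ of block $\ell$. Its derivative with respect to the effective weight equals its derivative with respect to $W_\ell$ at fixed adapter, which is $G_{i,\ell}$.
\begin{itemize}
\item Perturb $B_{i,\ell}\to B_{i,\ell}+\delta B$. The effective weight moves by $s\,\delta B\,A_{i,\ell}$, so the first-order change in the loss is $\langle G_{i,\ell}, s\,\delta B A_{i,\ell}\rangle_F=\langle sG_{i,\ell}A_{i,\ell}^\top,\delta B\rangle_F$. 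Hence $\nabla_{B_{i,\ell}}\mathcal{L}^i_{\theta,\phi_i}=sG_{i,\ell}A_{i,\ell}^\top$.
\item Perturb $A_{i,\ell}\to A_{i,\ell}+\delta A$. The same computation gives $\nabla_{A_{i,\ell}}\mathcal{L}^i_{\theta,\phi_i}=sB_{i,\ell}^\top G_{i,\ell}$.
\end{itemize}
Since $s=\alpha/r>0$, stationarity $\nabla_\phi\mathcal{L}^i=0$ is equivalent to $G_{i,\ell}A_{i,\ell}^\top=0$ and $B_{i,\ell}^\top G_{i,\ell}=0$ for every block $\ell$.

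\textbf{Membership in $\mathcal{T}_{i,\ell}^\perp$.} From $B_{i,\ell}^\top G=0$, the columns of $G$ are orthogonal to $\mathrm{col}(B_{i,\ell})$, so $P_{i,\ell}G=0$. From $GA_{i,\ell}^\top=0$, the rows of $G$ are orthogonal to $\mathrm{row}(A_{i,\ell})$, so $GQ_{i,\ell}=0$. Two consequences follow.
\begin{itemize}
\item Substituting into \eqref{app:eq:proj} gives $\Pi^\perp_{i,\ell}(G)=(I-P)G(I-Q)=G-PG-GQ+PGQ=G$.
\item Equivalently, $\Pi_{i,\ell}(G)=PG+(I-P)GQ=0$.
\end{itemize}
One can also check orthogonality to the tangent space directly: for any element of $\mathcal{T}_{i,\ell}$,
\[
\langle G, BX+YA\rangle_F=\langle B^\top G,X\rangle_F+\langle GA^\top,Y\rangle_F=0 .
\]

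\textbf{Assembling the identity.} Apply this on every adapted block. On $\theta_{\mathcal{N}}$, $\Pi_i$ is zero and $\Pi_i^\perp$ is the identity by definition, so nothing needs checking there. This yields \eqref{app:eq:stat}.

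\textbf{Specialization to $\nabla F_i$.} Take $\phi_i=\phi_i^*$, which is a minimizer and therefore stationary in $\phi$. By the envelope-theorem identity $\nabla F_i(\theta)=\nabla_\theta\mathcal{L}^i_{\theta,\phi_i^*}$, the second claim follows immediately with $\Pi_i$ built from $\phi_i^*$.

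\textbf{Main obstacle.} The only delicate step is confirming that the formulas in \eqref{app:eq:proj} are genuinely the orthogonal projectors when $B_{i,\ell}$ or $A_{i,\ell}$ is rank-deficient. For instance, $B_{i,\ell}=0$ at initialization gives $P=0$. I would handle this as follows.
\begin{itemize}
\item Work with the column and row spaces directly; $P$ and $Q$ are orthogonal projectors onto them via the pseudoinverse.
\item Note that $\mathcal{T}_{i,\ell}=\{Z:(I-P)Z(I-Q)=0\}$ in general.
\item Observe that the argument above uses only $PG=0$ and $GQ=0$, both of which hold regardless of rank.
\end{itemize}
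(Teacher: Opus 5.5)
Your proposal is correct and follows essentially the same route as the paper. In both, the chain rule through the effective weight $W_\ell+sB_{i,\ell}A_{i,\ell}$ gives the two adapter gradients, stationarity forces $P_{i,\ell}G_{i,\ell}=0$ and $G_{i,\ell}Q_{i,\ell}=0$, hence $G_{i,\ell}=(I-P_{i,\ell})G_{i,\ell}(I-Q_{i,\ell})$, and $\Pi_i^\perp$ acts as the identity on $\theta_{\mathcal{N}}$. Your extra remarks are correct refinements that the paper leaves implicit: the direct orthogonality check against $B_{i,\ell}X+YA_{i,\ell}$, and the observation that only $PG=0$ and $GQ=0$ are used, so rank deficiency does no harm.
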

 
\begin{proof}
The loss depends on $W_\ell$, $B_{i,\ell}$, $A_{i,\ell}$ only through the effective weight $W_\ell+sB_{i,\ell}A_{i,\ell}$, so $\nabla_{W_\ell}\mathcal{L}^i=G_{i,\ell}$ and the chain rule gives the two adapter gradients. If both vanish, then $G_{i,\ell}Q_{i,\ell}=(G_{i,\ell}A_{i,\ell}^\top)(A_{i,\ell}A_{i,\ell}^\top)^{+}A_{i,\ell}=0$ and $P_{i,\ell}G_{i,\ell}=B_{i,\ell}(B_{i,\ell}^\top B_{i,\ell})^{+}(B_{i,\ell}^\top G_{i,\ell})=0$. Hence $G_{i,\ell}=(I-P_{i,\ell})G_{i,\ell}(I-Q_{i,\ell})=\Pi^\perp_{i,\ell}G_{i,\ell}$. On $\theta_{\mathcal{N}}$, $\Pi_i^\perp$ is the identity.
\end{proof}
 
\begin{remark}[Approximate stationarity and initialization]\label{app:rem:approx}
The two terms of $\Pi_{i,\ell}(G)$ in \eqref{app:eq:proj} are orthogonal. Let $\varsigma_r(\cdot)$ denote the $r$-th largest singular value. If $B_{i,\ell}$ and $A_{i,\ell}$ have full rank $r$, then
\[
\|\Pi_{i,\ell}G_{i,\ell}\|_F^2\le\frac{\|\nabla_{A_{i,\ell}}\mathcal{L}^i\|_F^2}{s^2\varsigma_r(B_{i,\ell})^2}+\frac{\|\nabla_{B_{i,\ell}}\mathcal{L}^i\|_F^2}{s^2\varsigma_r(A_{i,\ell})^2}.
\]
Summing over $\ell$ and applying Assumption~1 gives $\mathbb{E}\|\Pi_i\nabla_\theta\mathcal{L}^i_{\theta,\phi_i}\|^2\le\varepsilon_\phi/(s\,\varsigma_{i})^2$, where $\varsigma_i:=\min_\ell\min\{\varsigma_r(A_{i,\ell}),\varsigma_r(B_{i,\ell})\}$. So \eqref{app:eq:stat} holds up to $\mathcal{O}(\sqrt{\varepsilon_\phi})$ once adapters are well conditioned. At the zero initialization $B_{i,\ell}=0$, the tangent space degenerates and nothing is absorbed. The measured gradient diversity at round 0 therefore coincides with vanilla FedDPO's $\kappa^2$.
\end{remark}
 
\subsection{From Stationarity to Reduced Heterogeneity}
 
\begin{proposition}\label{app:prop:main}
Let $\Pi$ be any orthogonal projector on $\mathbb{R}^d$ whose range lies in the adapted blocks $\theta_{\mathcal{A}}$, and let $\Pi^\perp=I-\Pi$. Define
\[
e_i:=\nabla F_i(\theta)-\Pi^\perp g_i,\;
\delta:=\big\|\{e_i\}\big\|_N,\;
\kappa^2_{\parallel}:=\big\|\{\Pi(\bar g-g_i)\}\big\|_N^2 .
\]
Then
\begin{equation}\label{app:eq:prop}
\kappa_{\mathcal{N}}-\delta\ \le\ \kappa_{\mathrm{eff}}\ \le\ \sqrt{\kappa^2-\kappa^2_{\parallel}}+\delta .
\end{equation}
In particular, $\kappa^2_{\mathrm{eff}}<\kappa^2$ whenever $\delta<\kappa-\sqrt{\kappa^2-\kappa^2_\parallel}$. This threshold is approximately $\kappa^2_\parallel/(2\kappa)$ when $\kappa_\parallel\ll\kappa$.
\end{proposition}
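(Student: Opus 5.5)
The plan is to decompose $\nabla F_i(\theta)=\Pi^\perp g_i+e_i$ and work entirely with the seminorm $\|\{\cdot\}\|_N$, using its triangle inequality together with the fact that $\Pi^\perp$ is an orthogonal projector. Averaging this decomposition over $i$ gives $\nabla F=\Pi^\perp\bar g+\bar e$, where $\bar e:=\frac1N\sum_i e_i$. Hence
\[
\nabla F-\nabla F_i=\Pi^\perp(\bar g-g_i)+(\bar e-e_i).
\]
By the triangle inequality for $\|\cdot\|_N$,
\[
\kappa_{\mathrm{eff}}\le\big\|\{\Pi^\perp(\bar g-g_i)\}\big\|_N+\big\|\{\bar e-e_i\}\big\|_N .
\]
For the first term, orthogonality of $\Pi$ and $\Pi^\perp$ gives the Pythagorean identity $\|\bar g-g_i\|^2=\|\Pi(\bar g-g_i)\|^2+\|\Pi^\perp(\bar g-g_i)\|^2$ pointwise. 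Taking $\frac1N\sum_i\mathbb{E}$ yields $\|\{\Pi^\perp(\bar g-g_i)\}\|_N^2=\kappa^2-\kappa_\parallel^2$.

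For the second term, I will use that centering is a contraction in $\|\cdot\|_N$: $\frac1N\sum_i\|\bar e-e_i\|^2=\frac1N\sum_i\|e_i\|^2-\|\bar e\|^2\le\frac1N\sum_i\|e_i\|^2$, and this survives taking expectations. Therefore $\|\{\bar e-e_i\}\|_N\le\delta$, which gives the upper bound in \eqref{app:eq:prop}. One subtlety needs checking: $\bar e$ must be formed with the same random realization as the $e_i$. That holds because the expectation is taken jointly.

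For the lower bound I will use the reverse triangle inequality, $\kappa_{\mathrm{eff}}\ge\|\{\Pi^\perp(\bar g-g_i)\}\|_N-\delta$. Since the range of $\Pi$ lies in $\theta_{\mathcal A}$, $\Pi^\perp$ acts as the identity on $\theta_{\mathcal N}$. So the $\theta_{\mathcal N}$-block of $\Pi^\perp(\bar g-g_i)$ equals that of $\bar g-g_i$, and discarding the $\theta_{\mathcal A}$-block gives $\|\{\Pi^\perp(\bar g-g_i)\}\|_N^2\ge\kappa_{\mathcal N}^2$. This is the step I expect to need the most care, because it depends on reading the split $\kappa^2=\kappa^2_{\mathcal A}+\kappa^2_{\mathcal N}$ as block-coordinate restriction. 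With that reading, the restriction is an orthogonal coordinate projection commuting with $\Pi^\perp$ on $\theta_{\mathcal N}$, and the inequality follows.

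The final claim is arithmetic. If $\delta<\kappa-\sqrt{\kappa^2-\kappa_\parallel^2}$, the upper bound gives $\kappa_{\mathrm{eff}}<\kappa$, so $\kappa_{\mathrm{eff}}^2<\kappa^2$. For the approximation, $\sqrt{\kappa^2-\kappa_\parallel^2}=\kappa\sqrt{1-\kappa_\parallel^2/\kappa^2}\approx\kappa-\kappa_\parallel^2/(2\kappa)$ by a first-order Taylor expansion when $\kappa_\parallel\ll\kappa$.
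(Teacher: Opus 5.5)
Your proof is correct and follows the paper's argument essentially step for step. It uses the same decomposition $\nabla F-\nabla F_i=\Pi^\perp(\bar g-g_i)+(\bar e-e_i)$, the triangle inequality for $\|\cdot\|_N$, the Pythagorean identity for the first term and the centering contraction for the second. The only difference is in the lower bound: the paper restricts to the $\theta_{\mathcal N}$-block before applying the reverse triangle inequality, whereas you apply it first and then restrict, which gives the slightly sharper two-sided estimate $\bigl|\kappa_{\mathrm{eff}}-\sqrt{\kappa^2-\kappa^2_\parallel}\bigr|\le\delta$ before you weaken it via $\kappa^2-\kappa^2_\parallel\ge\kappa^2_{\mathcal N}$.
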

 
\begin{proof}
Averaging the definition of $e_i$ gives $\nabla F=\Pi^\perp\bar g+\bar e$ with $\bar e=\frac1N\sum_ie_i$. Hence
\[
\nabla F-\nabla F_i=\Pi^\perp(\bar g-g_i)+(\bar e-e_i).
\]
\emph{Upper bound.} By the triangle inequality for $\|\cdot\|_N$, $\kappa_{\mathrm{eff}}\le\|\{\Pi^\perp(\bar g-g_i)\}\|_N+\|\{\bar e-e_i\}\|_N$. Since $\Pi$ is an orthogonal projector, $\|\Pi^\perp v\|^2=\|v\|^2-\|\Pi v\|^2$. Hence, the first term equals $\sqrt{\kappa^2-\kappa^2_\parallel}$. For the second, pointwise $\frac1N\sum_i\|\bar e-e_i\|^2=\frac1N\sum_i\|e_i\|^2-\|\bar e\|^2\le\frac1N\sum_i\|e_i\|^2$; taking expectations, it is at most $\delta$.
 
\emph{Lower bound.} On $\theta_{\mathcal{N}}$, $\Pi^\perp=I$, so $(\nabla F-\nabla F_i)_{\mathcal{N}}=(\bar g-g_i)_{\mathcal{N}}+(\bar e-e_i)_{\mathcal{N}}$. Hence $\kappa_{\mathrm{eff}}\ge\|\{(\nabla F-\nabla F_i)_{\mathcal{N}}\}\|_N\ge\kappa_{\mathcal{N}}-\delta$.
\end{proof}
 
Proposition~\ref{app:prop:main} holds for any such projector. Its content lies in choosing $\Pi$ so that $\delta$ is small. The next result identifies what controls $\delta$.
 
\begin{corollary}[Sources of $\delta$]\label{app:cor:delta}
Suppose $\mathcal{L}^i$ has a $\rho$-Lipschitz Hessian, and let $H_i:=\nabla^2\mathcal{L}^i_\theta$. Let $\Pi_i$ be built from $\phi_i^*$. Then
\begin{equation}\label{app:eq:delta}
\|e_i\|\ \le\ \underbrace{\|\Pi_i^\perp H_i\Pi_i\|_{\mathrm{op}}\,\|\Gamma_i^*\|}_{\text{cross-curvature}}
+\underbrace{\tfrac{\rho}{2}\|\Gamma_i^*\|^2}_{\text{higher order}}
+\underbrace{\|(\Pi_i-\Pi)\,g_i\|}_{\text{subspace misalignment}} .
\end{equation}
\end{corollary}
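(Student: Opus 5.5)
We must bound $e_i=\nabla F_i(\theta)-\Pi^\perp g_i$, where $\nabla F_i(\theta)=\nabla\mathcal{L}^i_{\theta+\Gamma_i^*}$ by the envelope theorem and the shift identity \eqref{app:eq:shift}. The plan is to use Lemma~\ref{app:lem:stat} to insert the projector $\Pi_i^\perp$, Taylor-expand around $\theta$, and then switch from $\Pi_i$ to the arbitrary projector $\Pi$.

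\textbf{Step 1: insert the projector.} Lemma~\ref{app:lem:stat}, applied at $\phi_i^*$, gives $\nabla F_i(\theta)=\Pi_i^\perp\nabla\mathcal{L}^i_{\theta+\Gamma_i^*}$. Hence
\[
e_i=\Pi_i^\perp\big(\nabla\mathcal{L}^i_{\theta+\Gamma_i^*}-g_i\big)+(\Pi_i^\perp-\Pi^\perp)g_i .
\]
Since $\Pi_i^\perp-\Pi^\perp=\Pi-\Pi_i$, the second term has norm $\|(\Pi_i-\Pi)g_i\|$. This is exactly the subspace-misalignment term.

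\textbf{Step 2: expand the gradient.} The $\rho$-Lipschitz Hessian gives
\[
\nabla\mathcal{L}^i_{\theta+\Gamma_i^*}=g_i+H_i\Gamma_i^*+r_i,\qquad \|r_i\|\le\tfrac{\rho}{2}\|\Gamma_i^*\|^2 .
\]
This follows from the integral form $\int_0^1\big(\nabla^2\mathcal{L}^i_{\theta+t\Gamma_i^*}-H_i\big)\Gamma_i^*\,dt$ together with $\int_0^1\rho t\,dt=\tfrac12$.

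Substituting into Step 1, the first term becomes $\Pi_i^\perp H_i\Gamma_i^*+\Pi_i^\perp r_i$. Because $\Pi_i^\perp$ is an orthogonal projector, $\|\Pi_i^\perp r_i\|\le\|r_i\|$, which produces the higher-order term.

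\textbf{Step 3: the cross-curvature term.} This is the one delicate point. We need $\Gamma_i^*\in\mathrm{range}(\Pi_i)$, so that $\Gamma_i^*=\Pi_i\Gamma_i^*$ and therefore
\[
\|\Pi_i^\perp H_i\Gamma_i^*\|=\|\Pi_i^\perp H_i\Pi_i\Gamma_i^*\|\le\|\Pi_i^\perp H_i\Pi_i\|_{\mathrm{op}}\|\Gamma_i^*\| .
\]
The paper already notes that $B_{i,\ell}A_{i,\ell}\in\mathcal{T}_{i,\ell}$. Indeed, $B_{i,\ell}A_{i,\ell}=B_{i,\ell}X$ with $X=A_{i,\ell}$, so the scaled shift $sB_{i,\ell}A_{i,\ell}$ on each block lies in $\mathcal{T}_{i,\ell}$. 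On $\theta_{\mathcal{N}}$, $\Gamma_i^*$ is zero, matching the fact that $\Pi_i$ is zero there. Thus $\Gamma_i^*\in\mathrm{range}(\Pi_i)$, with $\Pi_i$ built from $\phi_i^*$.

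Combining the three pieces with the triangle inequality yields \eqref{app:eq:delta}. The only care needed is that $\Pi_i$ and $\Gamma_i^*$ come from the same minimizer $\phi_i^*$. Because the tangent space is invariant under reparameterization, any representative $(B,A)$ of that minimizer works.
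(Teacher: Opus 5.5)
Your proof is correct and follows the paper's argument step for step. You insert $\Pi_i^\perp$ via the stationarity lemma at $\phi_i^*$, split $e_i=\Pi_i^\perp(\nabla F_i-g_i)+(\Pi-\Pi_i)g_i$, Taylor-expand $\nabla\mathcal{L}^i_{\theta+\Gamma_i^*}$ using the $\rho$-Lipschitz Hessian, and use $\Gamma_i^*\in\mathrm{range}(\Pi_i)$ to replace $\Pi_i^\perp H_i\Gamma_i^*$ by $\Pi_i^\perp H_i\Pi_i\Gamma_i^*$, spelling out the integral-form remainder bound and the membership $sB_{i,\ell}A_{i,\ell}\in\mathcal{T}_{i,\ell}$ that the paper simply asserts.
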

 
\begin{proof}
By Lemma~\ref{app:lem:stat}, $\nabla F_i=\Pi_i^\perp\nabla F_i$, so
\[
e_i=\Pi_i^\perp(\nabla F_i-g_i)+(\Pi_i^\perp-\Pi^\perp)g_i .
\]
By \eqref{app:eq:shift} and Taylor's theorem, $\nabla F_i=\nabla\mathcal{L}^i_{\theta+\Gamma_i^*}=g_i+H_i\Gamma_i^*+q_i$ with $\|q_i\|\le\frac\rho2\|\Gamma_i^*\|^2$. Since $\Gamma_i^*\in\mathrm{range}(\Pi_i)$, we have $\Pi_i^\perp H_i\Gamma_i^*=\Pi_i^\perp H_i\Pi_i\Gamma_i^*$. Finally, $\Pi_i^\perp-\Pi^\perp=\Pi-\Pi_i$.
\end{proof}
 
\paragraph{Interpretation.}
\begin{itemize}
\item \textbf{Shared subspaces.} If the clients' adapters share column and row spaces, so that $\Pi_i=\Pi$ for all $i$, the misalignment term vanishes. If, in addition, the Hessian couples $\mathcal{T}$ and $\mathcal{T}^\perp$ only weakly, \eqref{app:eq:prop} gives
\[
\kappa^2_{\mathrm{eff}}\le\kappa^2-\kappa^2_\parallel+\mathcal{O}(\kappa\,\delta+\delta^2),
\]
where $\delta$ is second order in the adapter shift. This is the precise sense in which ``$\kappa^2_{\mathrm{eff}}\le\kappa^2$ to first order.''
\item \textbf{Misaligned subspaces.} Independently trained adapters do not share subspaces in general. Then $\Pi$ can be taken as the projector onto a common subspace (e.g., $\bigcap_i\mathcal{T}_i$ or a principal common subspace), and the misalignment term is paid explicitly. Outside these conditions, $\kappa^2_{\mathrm{eff}}\le\kappa^2$ is not guaranteed and must be checked empirically.
\end{itemize}
 
\subsection{Role of the LoRA Rank $r$}
 
\paragraph{A ceiling set by which modules are adapted.}
Because $\mathrm{range}(\Pi)\subseteq\theta_{\mathcal{A}}$, we have $\kappa^2_\parallel\le\kappa^2_{\mathcal{A}}$, and \eqref{app:eq:prop} gives the floor $\kappa_{\mathrm{eff}}\ge\kappa_{\mathcal{N}}-\delta$. Adapting only \texttt{c\_attn} can never remove heterogeneity carried by the MLP, embedding, or LayerNorm parameters, regardless of $r$.
 
\paragraph{Growth with $r$.}
Increasing $r$ enlarges $\mathcal{T}_{i,\ell}$, whose dimension is $r(m+n)-r^2$. For nested adapter subspaces, $\kappa^2_\parallel$ is therefore nondecreasing in $r$.
 
\paragraph{Saturation.}
Let $Z_{i,\ell}$ be block $\ell$ of $\bar g-g_i$. A block lies in $\mathcal{T}_\ell$ if and only if $(I-P_\ell)Z(I-Q_\ell)=0$. A sufficient condition is that the column space of $B_\ell$ contains the column space of $Z$. Hence $\kappa^2_\parallel=\kappa^2_{\mathcal{A}}$ as soon as the column space of $B_\ell$ contains $\sum_i\mathrm{col}(Z_{i,\ell})$ for every $\ell$, which is possible when $r\ge\dim\sum_i\mathrm{col}(Z_{i,\ell})$. The rank required is thus governed by the (numerical) ranks of the $N$ mean-zero heterogeneity blocks, not by $d$. Beyond this point, increasing $r$ removes no further heterogeneity.
 
\paragraph{The cost side.}
We hypothesize that larger adapters are harder to bring near stationarity in $E$ local steps. This would raise $\varepsilon_\phi$, and hence the bias $C\varepsilon_\phi$ in Theorem~1, suggesting a non-monotone trade-off in $r$. The SHP rank ablation is consistent with this. On hh-rlhf, the gradient norm decreases monotonically up to $r=16$, which suggests that saturation is not reached in that range.
 
\subsection{DPO-Specific Structure}
 
Consider a linear layer with effective weight $\widetilde W$. Let $x_t$ be its input at token $t$ and $u_t$ the back-propagated signal at its output. Then $\partial\log\pi(\tau)/\partial\widetilde W=\sum_{t\in\tau}u_tx_t^\top$, up to the transpose convention of the implementation. With the implicit-reward margin $h_\theta(\tau^+,\tau^-):=\Delta_\theta(\tau^+,\tau^-)-\Delta_{\mathrm{ref}}(\tau^+,\tau^-)$, the per-client DPO gradient block is
\begin{equation}\label{app:eq:dpograd}
\begin{split}
&\nabla_{\widetilde W}\mathcal{L}^i
=-\beta\,\mathbb{E}_{(\tau^+,\tau^-)\sim\mathcal{D}_i}\Big[\sigma\big(-\beta h_\theta\big)\\&\Big(\textstyle\sum_{t\in\tau^+}u_tx_t^\top-\sum_{t\in\tau^-}u_tx_t^\top\Big)\Big].
\end{split}
\end{equation}
This structure has two consequences for the quantities above.
 
\begin{enumerate}
\item \textbf{Numerical rank of the heterogeneity blocks.} Each $Z_{i,\ell}$ is a weighted sum of token-level outer products over a small local dataset ($|\mathcal{D}_i|=120$ pairs, 64 tokens). When the sigmoid weights concentrate on a few pairs, $Z_{i,\ell}$ is numerically low rank, and a small $r$ suffices for the saturation condition above. This is a hypothesis, and it can be checked directly from the singular values of $Z_{i,\ell}$.
\item \textbf{Source of heterogeneity and the role of $\beta$.} The weight $\sigma(-\beta h_\theta)$ emphasizes pairs with small or negative margin. Clients differ in which pairs these are, and this is where their gradients diverge. As $\beta$ grows, $\sigma(-\beta h_\theta)$ approaches an indicator of misordered pairs ($h_\theta<0$), concentrating the weights further. We conjecture that this lowers the numerical rank of $Z_{i,\ell}$ and increases the absorbed fraction $\kappa^2_\parallel/\kappa^2$. All our experiments use $\beta=0.2$, so this is untested.
\end{enumerate}
 
\subsection{When Is the Reduction Largest?}
 
By Proposition~\ref{app:prop:main} and Corollary~\ref{app:cor:delta}, the gap $\kappa^2-\kappa^2_{\mathrm{eff}}$ is large when:
\begin{enumerate}
\item \textbf{Heterogeneity sits in adapted modules}, i.e., $\kappa^2_{\mathcal{A}}\approx\kappa^2$. If $\kappa^2_{\mathcal{N}}$ dominates, adapting more modules (e.g., MLP projections) helps more than raising $r$.
\item \textbf{The adapted-block heterogeneity lies in the adapters' tangent spaces}, i.e., $\kappa^2_\parallel\approx\kappa^2_{\mathcal{A}}$. This requires $r$ to be large relative to the numerical rank of the $Z_{i,\ell}$.
\item \textbf{$\delta$ is small}: weak cross-curvature, small adapter shifts $\|\Gamma_i^*\|$, and aligned adapter subspaces across clients.
\end{enumerate}
The gap is small when preferences are nearly homogeneous ($\kappa^2\approx0$), when heterogeneity resides mainly in non-adapted parameters, or when adapter subspaces are strongly misaligned.
 
These conditions concern $\kappa^2_{\mathrm{eff}}$ itself. For the algorithm, it additionally matters that adapters are near stationarity (small $\varepsilon_\phi$). Otherwise the base gradients actually transmitted differ from $\nabla F_i$ by the bias $C\varepsilon_\phi$ of Theorem~1, and Lemma~\ref{app:lem:stat} holds only approximately (Remark~\ref{app:rem:approx}).
 
\subsection{Intuition}
 
By \eqref{app:eq:shift}, each client's adapter is a private, low-rank shift of the shared weights. In vanilla FedDPO, every client pushes the shared base in its own direction, and these directions partially cancel under FedAvg, leaving the diversity $\kappa^2$.
 
In pFedDPO, each client first moves its private shift until the loss can no longer be reduced within the adapter's tangent space (Lemma~\ref{app:lem:stat}). Whatever part of the client's disagreement with the population can be expressed there is resolved locally and never reaches the server. What the client uploads is the remainder: the part of its gradient orthogonal to that space, plus non-adapted parameters.
 
The adapter thus acts as a client-specific pre-filter on the base gradient. The filter removes heterogeneity only where it can express it: inside the adapted modules ($\kappa^2_{\mathcal{A}}$) and inside its rank budget ($\kappa^2_\parallel$). It removes it cleanly only when clients' filters are roughly aligned and the curvature does not leak the shift back into the orthogonal complement (small $\delta$).
 
\subsection{Measuring $\kappa^2$ and $\kappa^2_{\mathrm{eff}}$ on the Same Trajectory}
\label{app:kappa:measure}
 
The empirical gradient diversity in Figures~5(b) and~8(b) is computed from pFedDPO's base gradients with the current adapters, i.e., $\widehat\kappa^2_{\mathrm{eff}}:=\frac1N\sum_i\|\frac1N\sum_j\nabla_\theta\mathcal{L}^j_{\theta,\phi_j}-\nabla_\theta\mathcal{L}^i_{\theta,\phi_i}\|^2$. This estimates $\kappa^2_{\mathrm{eff}}$ up to the adapter bias of Assumption~1, but on its own it cannot separate the adapter effect from ordinary training progress.
 
Proposition~\ref{app:prop:main} can be tested directly by computing, at each logging round and at the \emph{same} base $\theta^r$:
\begin{enumerate}
\item[(i)] per-client base gradients with adapters disabled, giving $\kappa^2$;
\item[(ii)] the same gradients with adapters enabled, giving $\widehat\kappa^2_{\mathrm{eff}}$;
\item[(iii)] the block split of (i), giving $\kappa^2_{\mathcal{A}}$ and $\kappa^2_{\mathcal{N}}$;
\item[(iv)] $\frac1N\sum_i\|\nabla_\phi\mathcal{L}^i_{\theta,\phi_i}\|^2$, an estimate of $\varepsilon_\phi$.
\end{enumerate}
The ratio $\widehat\kappa^2_{\mathrm{eff}}/\kappa^2$ then isolates the reduction due to the adapters, and $\kappa^2_{\mathcal{N}}$ gives the floor predicted by \eqref{app:eq:prop}.

\section{Additional Results}\label{sec:additional_results}
\paragraph{Datasets.} Stanford Human Preferences (SHP)~\cite{ethayarajh2021understanding} contains about 385K pairs of human-written Reddit comments answering the same post, drawn from 18 subject-area subreddits; comment $A$ is labeled preferred over $B$ if it received a higher score despite being posted later, so labels reflect the collective votes of Reddit users rather than designated annotators. Anthropic hh-rlhf~\cite{bai2022training} contains multi-turn human--assistant dialogues in which a crowdworker chose the more helpful (helpfulness subsets) or less harmful (harmlessness subset) of two model responses; the chosen and rejected dialogues share the conversation history and differ in the final assistant turn. For each dataset we form $N=5$ agents with 120 disjoint pairs each. Sequences are truncated to 64 tokens, and reward accuracy is computed on training pairs of each agent.

\textbf{SHP Table.} Table~\ref{tab:dpo_comparison} summarizes round-80 results on SHP. Centralized DPO has the highest gradient norm and lowest accuracy, reflecting its smaller per-round gradient work in this (non-compute-matched) regime. Vanilla FedDPO attains the lowest training loss (0.05) and highest reward accuracy (0.82). pFedDPO attains the lowest gradient norm (0.02), with training loss (0.06) and accuracy (0.80) close to vanilla FedDPO. Its measured base-gradient diversity falls from $\sim$1500 to $\sim$150--200. As discussed in the above on measuring $\kappa^2$ and $\kappa^2_{\mathrm{eff}}$, this trajectory alone does not separate the adapter effect from training progress.

\setlength{\tabcolsep}{3pt}

\begin{table*}[htbp]
\centering
\caption{Comparison of DPO variants. Results at round 80 on SHP. Centralized DPO performs worst in this regime: with batch size 4 it receives far less gradient work per round than distributed methods (effective batch $5\times4=20$ for FedDPO), consistent with the computation-dominated analysis in~\cite{jiang2026distributed}. $\kappa^2$ is measured on base model gradients only.}
\small
\begin{tabularx}{\textwidth}{XXXXX}
\toprule
Method & Grad Norm ($\downarrow$) & Train Loss ($\downarrow$) & Reward Acc. ($\uparrow$) & Empirical $\kappa^2$ ($\downarrow$, final) \\
\midrule
Centralized DPO 
  & $\sim$0.70 
  & $\sim$0.10 (unstable) 
  & $\sim$0.75 
  & N/A (no agents) \\
\addlinespace
FedDPO (S=5)~\cite{jiang2026distributed} 
  & $\sim$0.2 
  & $\sim$0.05 
  & $\sim$0.82
  & Not measured \\
\addlinespace
pFedDPO r=8 (ours, default) 
  & $\sim$0.02 
  & $\sim$0.06 
  & $\sim$0.80 
  & $\sim$150–200 (from $\sim$1500) \\
\bottomrule
\end{tabularx}
\label{tab:dpo_comparison}
\end{table*}

\textbf{HH-RLHF.} This appendix reports the full experimental results on Anthropic hh-rlhf using the identical setup as the SHP main results ($N=5$ agents, 120 pairs each, DistilGPT-2, AdamW $\eta=0.0001$, $E=3$, $R=80$ rounds). Figures~\ref{fig:comparative_hh_rlhf}-\ref{fig:participation_hh_rlhf} correspond directly to Figures \ref{fig:comparative_shp}-\ref{fig:participation_shp} in the main text. The qualitative findings are broadly consistent with SHP, with several notable quantitative differences that we highlight below.

\textbf{Comparative Study.}
Figure~\ref{fig:comparative_hh_rlhf} compares pFedDPO, vanilla FedDPO, and centralized DPO on hh-rlhf over 80 rounds. 

\textbf{Gradient norm (panel a).} All three methods begin near 1. pFedDPO descends most rapidly and settles at the lowest floor ($\sim$0.2 by round 80). Vanilla FedDPO (blue) is slightly higher but follows a similarly smooth descent. Centralized DPO (dashed grey) remains the worst throughout, fluctuating around $0.6-1.0$ and never converging stably. This replicates the computation-dominated regime observed on SHP: centralized DPO processes batch size 4 per step while each FedDPO round aggregates across $S=5$ agents (effective batch 20), giving distributed methods substantially more gradient work per round.

\textbf{Training loss (panel b).} pFedDPO and vanilla FedDPO both decrease from $\sim$0.85-0.90 to $\sim$0.10-0.15 by round 80, with pFedDPO reaching a marginally lower final value. Centralized DPO oscillates between 0.3 and 1.4 throughout, never achieving a stable low loss. The instability of centralized DPO is more pronounced on hh-rlhf than on SHP, consistent with its small per-step batch (4 pairs).

\textbf{Preference accuracy (panel c).} All three methods achieve notably higher accuracy on hh-rlhf than on SHP. pFedDPO reaches $\sim$0.95 by round 80, vanilla FedDPO reaches $\sim$0.92, and centralized DPO reaches $\sim$0.80. Absolute accuracies are substantially higher on hh-rlhf than on SHP (all methods $\ge 0.80$ vs.\ $\le 0.82$). We do not attribute this to label quality, since hh-rlhf labels are single crowdworker judgments and SHP labels are inferred from Reddit votes. A more plausible factor is pair structure: hh-rlhf pairs share the dialogue history and differ only in the final assistant turn, and harmlessness pairs often contain lexically distinctive rejected responses, whereas SHP compares two independently written human comments whose quality differences are often subtle.


\textbf{Effect of LoRA Personalization on Client Drift.}
In Figure~\ref{fig:diversity_hh_rlhf} (a), pFedDPO (green) maintains a consistently lower gradient norm than vanilla FedDPO (blue) throughout all 80 rounds on hh-rlhf, replicating the main SHP finding. The gap is visible from the first few rounds and persists to round 80, with pFedDPO settling near 0.2 versus vanilla FedDPO near 0.4-0.6. The magnitude of the improvement is slightly smaller than on SHP, consistent with hh-rlhf’s lower initial $\kappa^2$. 

In Figure~\ref{fig:diversity_hh_rlhf} (b), the empirical gradient diversity $\kappa^2$ on hh-rlhf starts at $\sim$900 at round 0 (compared to $\sim$1500 on SHP), indicating lower inter-client heterogeneity under our hh-rlhf partition than under our SHP partition.
The trajectory is substantially more volatile than on SHP: $\kappa^2$ does not drop cleanly but instead oscillates across the range 400–900 throughout training, with a spike near round 30 and fluctuations of $\pm$ 200–300 around subsequent measurements at rounds 20, 30, 40, 50, 60, and 70. Despite the oscillations, the overall trend is downward, ending near 400–500 by round 70, which is a reduction of approximately 45-55\% from the initial value. This is notably smaller than the 85–90\% reduction observed on SHP, and reflects two factors: (i) lower initial $\kappa^2$ leaves less room for absolute reduction; (ii) the adapters may be further from stationarity after $E=3$ local steps, increasing $C\varepsilon_\phi$, and measuring $\|\nabla_\phi \mathcal{L}^i\|^2$ would test this. Raw $\kappa^2$ values are not directly comparable across datasets because gradient scales differ. $\kappa^2/\|\bar g\|^2$ would be the comparable quantity.

the higher gradient variance of hh-rlhf's pairs makes the adapter stationarity condition (Assumption~\ref{assumption_1}) harder to satisfy within $E=3$ local steps, increasing $C\varepsilon_\epsilon$ and moderating the $\kappa^2_\text{eff}$ reduction.

\textbf{Per-Client Reward Accuracy.} Figure~\ref{fig:per_client_hh_rlhf} shows per-client accuracy for all five agents under pFedDPO on hh-rlhf. All agents improve steadily from $\sim$0.5-0.82 at round 0 to $\sim$0.88-0.97 by round 80, with the average (dashed) reaching approximately 0.90-0.92. Agent 1 (red) is the strongest performer, reaching $\sim$0.95-0.97 by round 80, while Agent 4 (orange) is the slowest, plateauing around 0.86–0.88. The remaining agents (2, 3, 5) cluster between 0.90 and 0.95. 
At round 80 the spread between the highest (Agent 1, $\sim$0.97) and lowest (Agent 4, $\sim$0.88) agents is about 9--10 percentage points, similar to SHP.
The result also confirms that all agents improve under pFedDPO.

\textbf{LoRA Rank Ablation.} Figure~\ref{fig:rank_hh_rlhf} shows the rank ablation $r\in\{2,4,8,16\}$ on hh-rlhf. In panel (a), the accuracy curves are nearly indistinguishable; at round 80 all ranks reach $\sim$0.90--0.92, with $r=8$ marginally highest. Panel (b) shows that the final gradient norm decreases monotonically with rank, from $\sim$0.36 at $r=2$ to $\sim$0.19 at $r=16$. Unlike SHP, where $r=8$ attained the lowest gradient norm, the gradient-norm benefit on hh-rlhf has not saturated by $r=16$ (see ``Role of the LoRA Rank''). Since accuracy is essentially flat across ranks, we keep $r=8$ as the default. On both datasets, rank affects gradient norm more than accuracy, and the best rank appears to be dataset-dependent.

\textbf{Participation Ablation.} In Figure~\ref{fig:participation_hh_rlhf}, it shows the participation ablation $S\in\{1,3,5\}$ for pFedDPO (left) and vanilla FedDPO (right) on hh-rlhf. $S=3$ (blue) achieves the lowest gradient norm, descending to $\sim$0.25 by round 80. $S=5$ (green) is intermediate ($\sim$0.32). $S=1$ (red) is the most volatile, fluctuating around 0.4-0.8 throughout without clear convergence. The $S=5$ vs. $S=3$ gap is moderate; the $S=1$ vs. $S=3$ gap is larger, consistent with the $\zeta^2_g$ variance floor growing sharply as $S$ decreases below the number of agents. 
The gap between participation levels is more compressed for vanilla FedDPO on hh-rlhf.
$S=5$ (green) converges to $\sim$0.6, while $S=3$ (blue) converges to $\sim$0.7. $S=1$ (red) stays near 1 throughout. 
Notably, the difference between $S=5$ personalized pFedDPO ($\sim$0.32) and $S=5$ vanilla FedDPO ($\sim$0.65) is substantial, which is a factor of $\sim2\times$ in gradient norm, and is the clearest evidence on hh-rlhf that personalization provides a meaningful convergence benefit beyond what increased participation alone achieves.

Cross-dataset comparison of participation results: on SHP, the advantage of pFedDPO over vanilla FedDPO at $S=5$ was visible but at $S=1$ both methods struggled similarly. On hh-rlhf, pFedDPO maintains a clear advantage over vanilla FedDPO even at $S=3$. 
One possible explanation is that adapters converge more reliably within $E=3$ local steps on hh-rlhf, reducing $C\varepsilon_\phi$, but we have not measured $\varepsilon_\phi$ directly. With a single seed, the S = 3 vs. S = 5 difference is within run-to-run variation.
\begin{figure*}
    \centering
    \includegraphics[width=1\linewidth]{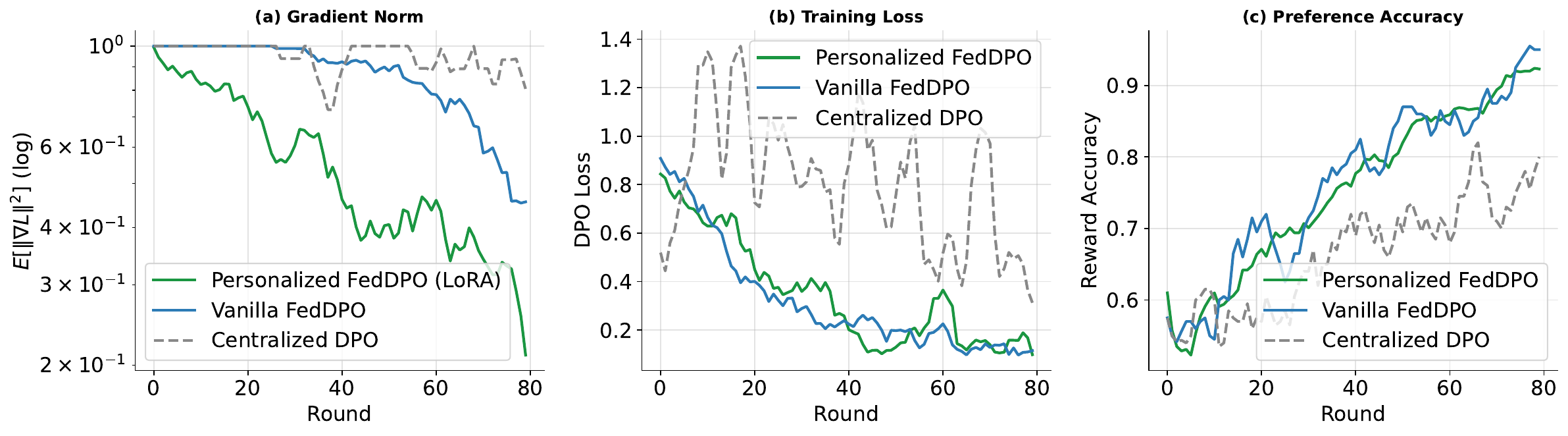}
    \caption{Comparison of baselines with HH-RLHF data.}
    \label{fig:comparative_hh_rlhf}
\end{figure*}

\begin{figure*}
    \centering
    \includegraphics[width=1\linewidth]{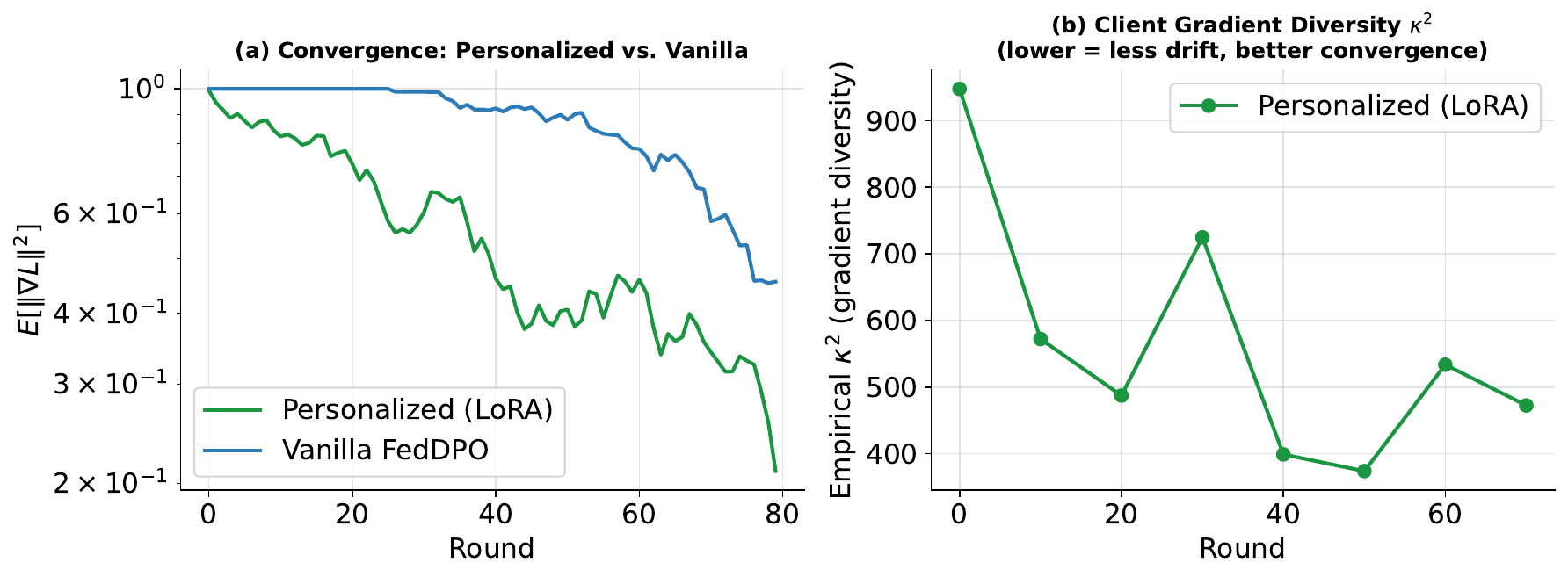}
    \caption{Gradient diversity $\kappa^2$ ablation with HH-RLHF data.}
    \label{fig:diversity_hh_rlhf}
\end{figure*}

\begin{figure*}
    \centering
    \includegraphics[width=1\linewidth]{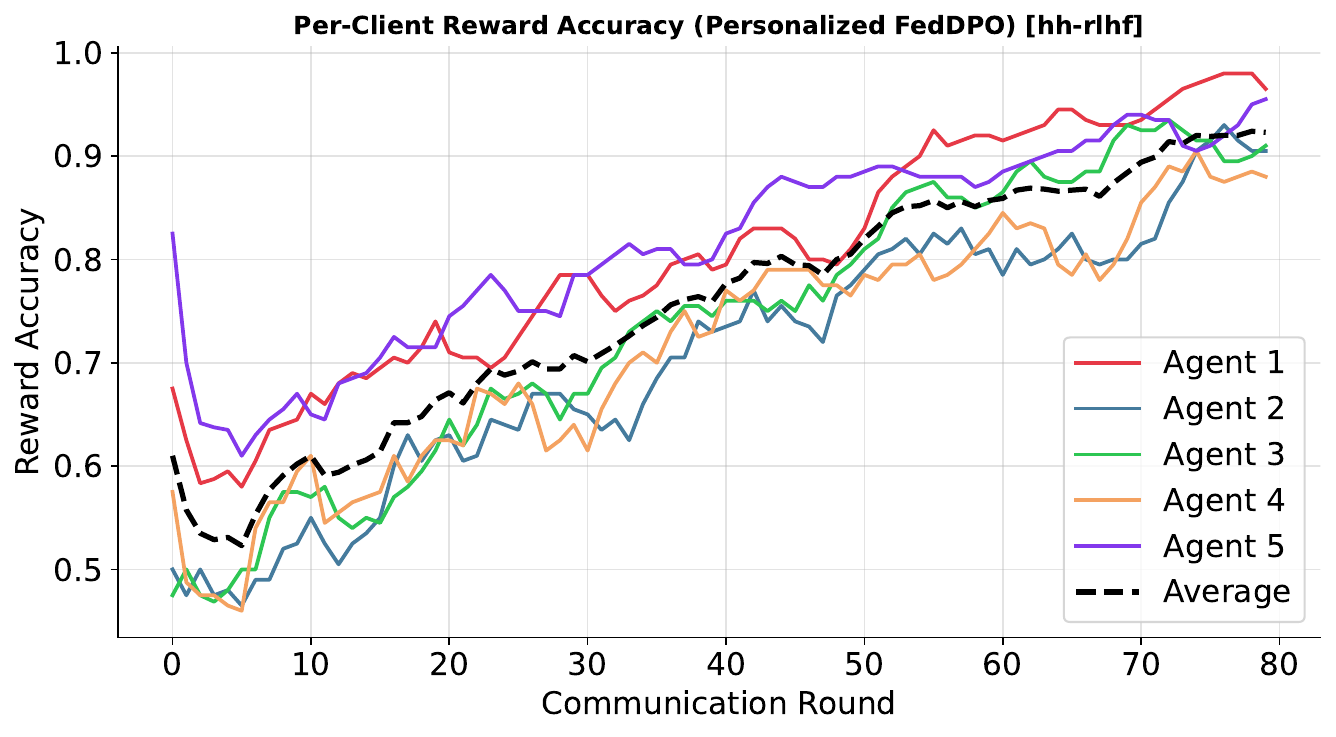}
    \caption{Per-client preference accuracy with HH-RLHF data.}
    \label{fig:per_client_hh_rlhf}
\end{figure*}
\begin{figure*}
    \centering
    \includegraphics[width=1\linewidth]{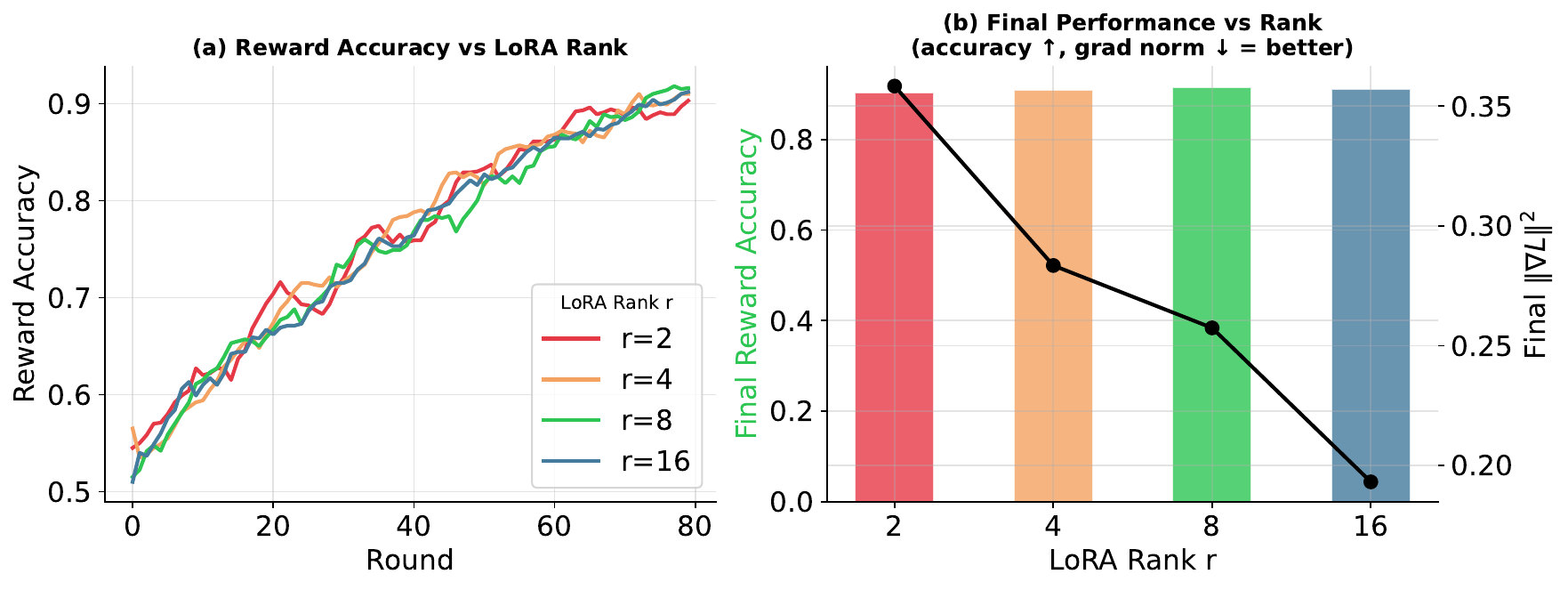}
    \caption{Low rank $r$ ablation with HH-RLHF data.}
    \label{fig:rank_hh_rlhf}
\end{figure*}

\begin{figure*}
    \centering
    \includegraphics[width=1\linewidth]{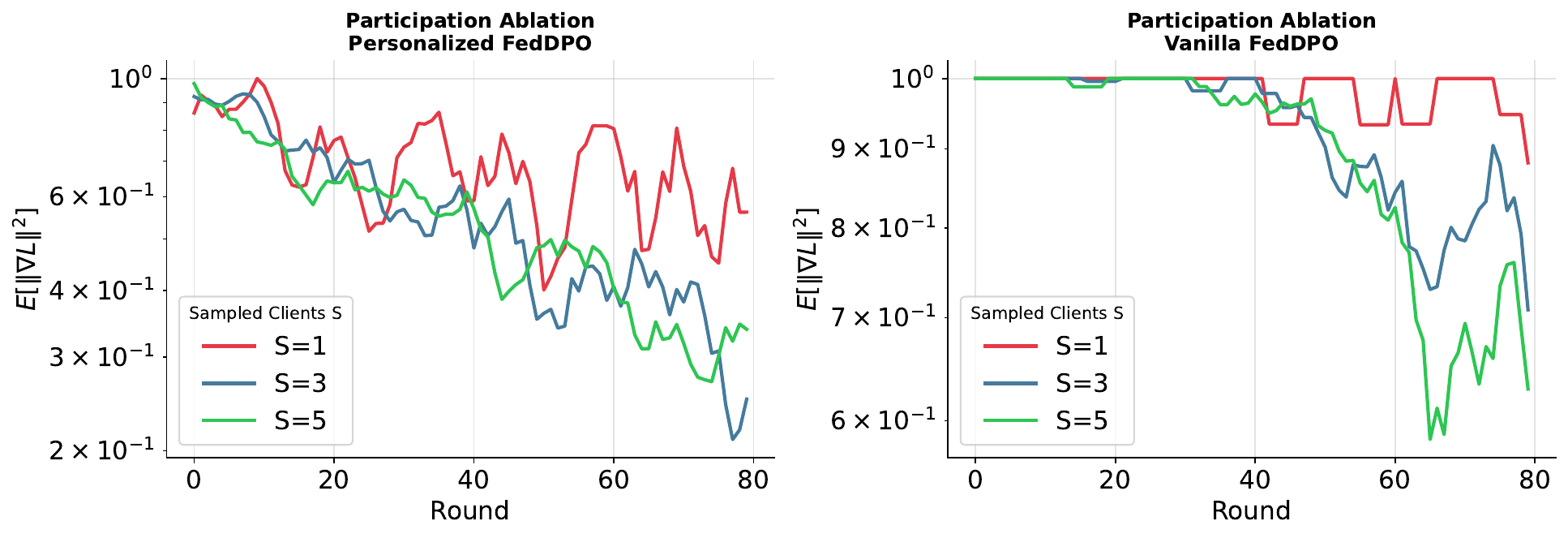}
    \caption{Participation $S$ ablation with HH-RLHF data.}
    \label{fig:participation_hh_rlhf}
\end{figure*}

\section*{Limitations}
Several limitations should be acknowledged. The theoretical analysis inherits the log-linear softmax policy assumptions of (Jiang 2026). Extending the bounds to transformer architectures requires controlling Fisher information spectra and is left for future work. Theorem~1\ref{theorem_1} further assumes that each local loss is $\mu_\phi$-PL in the adapter and that the minimizing low-rank shift is unique, which are strong for bilinear LoRA parameterizations, and $\kappa^2_{\mathrm{eff}}\le\kappa^2$ is established only to first order under a shared-subspace condition. The constant $C$ in the adapter-bias term $C\varepsilon_\phi$ is not available in closed form in terms of $E$, $\eta$, and local curvature; such an expression would let practitioners choose $E$ so that the bound is guaranteed to improve on vanilla FedDPO. pFedDPO does not reduce communication relative to vanilla FedDPO as both transmit $d$-dimensional base updates, whereas FedPDPO \cite{zhu2026fedpdpo} transmits only shared low-rank modules, which may be preferable under tight bandwidth. A variant that transmits low-rank updates while keeping adapters private is an open direction. Empirically, we use DistilGPT-2 ($\approx$82M parameters), 120 randomly partitioned pairs per agent, a single seed for ablations, and report accuracy on training pairs; scaling to 7B+ models, domain-based partitions, held-out evaluation, and multiple seeds are important future work. The rank ablations show that the best rank is dataset-dependent. Finally, pFedDPO assumes synchronous updates, and extending the asynchronous staleness analysis of (\cite{jiang2026distributed}, Thm.~5.4) to private adapters warrants dedicated analysis.

\section*{Ethics Statement}
Federated DPO, including pFedDPO, is designed to enable preference-based alignment of language models in privacy-preserving settings where raw data cannot be centralized. We identify the following ethical dimensions.

\noindent\textbf{Privacy and data minimization.} pFedDPO transmits only base model parameter updates, not raw preference data or LoRA adapters. This avoids sharing raw data and adapter weights, although full base-model updates are higher-dimensional than the low-rank updates shared by adapter-sharing methods and may leak more information. However, model updates can still leak information about local data through gradient inversion attacks. Combining pFedDPO with differential privacy mechanisms is a natural extension that we do not implement here due to the interaction between DP noise and the gradient diversity bound, which warrants dedicated analysis.

\noindent\textbf{Preference heterogeneity and value plurality.} A central motivation for personalization is that different users have genuinely different preferences, and a single global policy may suppress minority preferences by averaging toward the majority. pFedDPO mitigates this by allowing each client’s adapter to capture deviations from the shared base, potentially preserving value diversity. At the same time, private adapters could also encode harmful or biased preferences that would be moderated by global averaging. Deployment should include adapter-level content monitoring or periodic auditing.

\noindent\textbf{Adversarial clients and Byzantine robustness.} pFedDPO, like FedDPO, is vulnerable to clients that submit malicious base model updates to steer the global model. Private adapters cannot be inspected by the server, which may enable model poisoning through the base model update while concealing the intent in the local adapter. Robust aggregation methods should be considered in adversarial settings.

\noindent\textbf{Scope of application.} Our experiments use two open academic preference datasets (hh-rlhf, SHP) for research validation. Deployment of pFedDPO in production systems, particularly in high-stakes domains such as healthcare or legal advice, requires additional safeguards including human oversight of alignment targets, regular evaluation against safety benchmarks, and mechanisms for clients to audit or contest the preference optimization process.


\end{document}